\newtheorem{corollary}{Corollary}
\newtheorem{theorem}{\bf Theorem}
\newtheorem{proposition}{\bf Proposition}
\newtheorem{definition}{\bf Definition}
\begin{document}

\title{Brainstorming Generative Adversarial Networks (BGANs):\\
	Towards  Multi-Agent Generative Models with Distributed Datasets}

\author{Aidin~Ferdowsi, Walid~Saad,~\IEEEmembership{Fellow,~IEEE,}

\IEEEcompsocitemizethanks{\IEEEcompsocthanksitem Authors are with the Bradley Department of Electrical and Computer Engineering, Virginia Tech, Blacksburg, VA, USA, 24060.\protect\\
	E-mails:\{aidin,walids\}@vt.edu}}

\maketitle

\begin{abstract}
To achieve a high learning accuracy, generative adversarial networks (GANs) must be fed by large datasets that adequately represent the data space. However, in many scenarios, the available datasets may be limited and distributed across multiple agents, each of which is seeking to learn the distribution of the data on its own. In such scenarios, the agents often do not wish to share their local data as it can cause communication overhead for large datasets. In this paper, to address this multi-agent GAN problem, a novel brainstorming GAN (BGAN) architecture is proposed using which multiple agents can generate real-like data samples while operating in a fully distributed manner. BGAN allows the agents to gain information from other agents without sharing their real datasets but by ``brainstorming'' via the sharing of their generated data samples. In contrast to existing distributed GAN solutions, the proposed BGAN architecture is designed to be fully distributed, and it does not need any centralized controller. Moreover, BGANs are shown to be scalable and not dependent on the hyperparameters of the agents' deep neural networks (DNNs) thus enabling the agents to have different DNN architectures. Theoretically, the interactions between BGAN agents are analyzed as a game whose unique Nash equilibrium is derived. Experimental results show that BGAN can generate real-like data samples with higher quality and lower Jensen-Shannon divergence (JSD) and Fr\`echet Inception distance (FID) compared to other distributed GAN architectures.

{\bf Keywords:} Generative adversarial networks, distributed learning, communication efficiency
\end{abstract}

\section{Introduction}
\IEEEPARstart{G}{enerative} adversarial networks (GANs) are deep neural network (DNN) architectures that can learn a dataset distribution and generate realistic data points similar to this dataset \cite{NIPS2014_5423}. In GANs, a DNN called \emph{generator} generates data samples while another DNN called \emph{discriminator} tries to discriminate between the generator's data and the actual data. The interaction between the generator and discriminator results in optimizing the DNN weights such that the generator's generated samples look similar to the realistic data. Recently, GANs were adopted in several applications such as image synthesis \cite{li2016precomputed}, anomaly detection \cite{ferdowsi2019generative}, text to image translation \cite{reed2016generative}, speech processing \cite{pascual2017segan}, and video generation \cite{vondrick2016generating}.

Similar to many deep learning algorithms, GANs require large datasets to execute their associated tasks. Conventionally, such datasets are collected from the end-users of an application and stored at a data center to be then used by a central workstation or cloud to learn a task. However, relying on a central workstation requires powerful computational capabilities and can cause large delays. On the other hand, such central data storage units are vulnerable to external attacks. Furthermore, in many scenarios such as health and financial applications, the datasets are distributed across multiple agents (e.g., end-users) who do not intend to share them. Such challenges motivate parallelism and the need for distributed, multi-agent learning for GANs.

In a distributed learning architecture, \emph{multiple agents} can potentially learn the GAN task in a decentralized fashion by sharing some sort of information with each other while minimizing the communication overhead. The goal of each agent in a distributed GAN would be to learn how to generate high quality real-like data samples. Distributed GAN learning schemes also can also reduce the communication and computational limitations of centralized GAN models making them more practical for large-scale scenarios with many agents.

\subsection{Related Works}
For deep learning models, several distributed architectures have been proposed to facilitate parallelism using multiple computational units\cite{2008mapreduce,low2012distributed,pmlr,NIPS2017_6749,dean2012large,konevcny2016federated}. The authors in \cite{2008mapreduce} introduced the MapReduce architecture, in different agents aim at mapping the data into a new space reducing the data size. Moreover, GraphLab abstraction was proposed in \cite{low2012distributed} to facilitate graph computation across multiple workstations. In \cite{pmlr}, the authors have proposed a stochastic gradient push for distributed deep learning that converges to a stationary point. In addition, in \cite{NIPS2017_6749}, the authors have proposed to use ternary gradients to accelerate distributed deep learning in data parallelism. Furthermore, model parallelism is introduced in \cite{dean2012large}, in which the different layers and weights of a DNN structure are distributed between several agents. Recently, federated learning (FL) was introduced in \cite{konevcny2016federated} as an effective distributed learning mechanism that allows multiple agents to train a global model independently on their dataset and communicate training updates to a central server that aggregates the agent-side updates to train the global model. However, the works in \cite{2008mapreduce,low2012distributed,pmlr,NIPS2017_6749,dean2012large} and \cite{konevcny2016federated} as well as follow ups on FL focus on inference models and do not deal with generative models or GAN.

Recently, in \cite{hoang2017multi,durugkar2016generative,ghosh2017multi,hardy2019md}, and \cite{yonetani2019decentralized} the authors investigated the use of  distributed architectures that take into account GAN's unique structure which contains two separate DNNs (generator and discriminator). In \cite{hoang2017multi} and \cite{durugkar2016generative}, multiple generators or discriminators are used to stabilize the learning process but not to learn from multiple datasets. In \cite{ghosh2017multi}, a single discriminator is connected to multiple generators in order to learn multiple modalities of a dataset and to address the mode collapse problem. In \cite{hardy2019md}, the notion of privacy preserving GAN agents was studied for the first time, using two architectures: a) A multi-discriminator GAN (MDGAN) which contains multiple discriminators each located at every agent that owns private data and a central generator that generates the data and communicates its to each agent, and b) an adaptation of FL called FLGAN in which every agent trains a global GAN on its own data using a single per-agent discriminator and a per-agent generator and communicates the training updates to a central aggregator that learns a global GAN model. In \cite{yonetani2019decentralized}, analogously to MDGAN, a forgiver-first update (F2U) GAN is proposed such that every agent owns a discriminator and a central node has a generator. However, unlike the MDGAN model, at each training step, the generator's parameters are updated using the output of the most forgiving discriminator. Note that in the context of distributed learning privacy is defined as the right of controlling your own data and not sharing it with others. This is aligned with the common privacy definition of FL \cite{konevcny2016federated}. 

However, the works in \cite{2008mapreduce,low2012distributed,NIPS2017_6749,dean2012large,hoang2017multi}, and \cite{durugkar2016generative} consider a centralized dataset accessed by all of the agents. Moreover, the solutions in \cite{konevcny2016federated} and \cite{hardy2019md} can cause communication overhead since they require the agents to communicate, at every iteration, the DNN trainable parameters to a central node. Also, none of the GAN architectures in \cite{hoang2017multi,durugkar2016generative,ghosh2017multi,hardy2019md}, and \cite{yonetani2019decentralized} is fully distributed and they all require either a central generator or a central discriminator. In addition, the distributed GAN solutions in \cite{hoang2017multi,durugkar2016generative,ghosh2017multi,hardy2019md}, and \cite{yonetani2019decentralized} do not consider heterogeneous computation and storage capabilities for the agents.

\subsection{Contributions}
{The main contribution of this paper is the introduction of a novel brainstorming GAN (BGAN) architecture. This architecture allows agents to learn a data distribution in a fashion that is fully distributed, eliminating the need for a central controller.}  {In the BGAN architecture, each agent, equipped with a single generator and a single discriminator, owns a unique dataset. During training, agents share their \emph{ideas}—generated data samples—with their neighbors. This allows them to communicate information about their dataset without sharing the actual data samples.} As such, the proposed approach enables the GANs to collaboratively brainstorm in order to generate high quality real-like data samples. This is analogous to how humans \emph{brainstorm ideas} to come up with solutions. {To the best of our knowledge, this is the first work that proposes a multi-agent learning architecture for GANs that operates without a central controller, as mentioned earlier.} In particular, the proposed BGAN has the following key features:
\begin{enumerate}
	\item The BGAN architecture is fully distributed and does not require any centralized controller.
	\item {Compared to previous distributed GAN models such as MDGAN, FLGAN, and F2U, our approach significantly reduces communication overhead.}
	\item It allows defining different DNN architectures for different agents depending on their computational and storage capabilities.
\end{enumerate}
To characterize the performance of BGAN, we define a game between the BGAN agents and we analytically derive its Nash equilibrium (NE). We prove the uniqueness of the derived NE for the defined game. Moreover, we analyze each agent's connection structure with neighboring agents and characterize the minimum connectivity requirements that enable each agent to gain information from all of the other agents. We compare the performance of our proposed BGAN with other state-of-the-art architectures such as MDGAN, FLGAN, and F2U and show that BGAN can outperform them in terms of Jensen-Shannon divergence (JSD), Fr\`echet Inception distance (FID), and communication requirements besides the fact that, unlike the other models, BGAN is fully distributed and allows different DNN architectures for agents. 

{BGANs are particularly useful in practical, real-world scenarios where data owners wish to receive information from other data owners without sharing their own data.} {For instance, hospitals that own patients’ data, such as radiology images of patient lungs, may wish to generate synthetic images for future experiments. To improve the image generation process, they can use BGAN to receive information from other hospitals without sharing their own datasets.} {Similarly, financial firms, which often have scarce datasets that do not cover the entire data space, can use BGANs to learn better data representations without sharing their local data.} Another key application is within the context of an Internet of Things. In particular, Internet of things devices collect information from their users (e.g., location, voice, heartbeats) that cannot be shared publicly, thus they can use BGANs for information flow and to learn better data space representations. Therefore, BGANs can be used in broad range of areas such as medical applications, finance, personal activity, and the Internet of things.  

The rest of the paper is organized as follows. Section \ref{sec:system} describes the multi-agent learning system model. In Section \ref{sec:BGAN}, the BGAN architecture is proposed and the analytical results are derived. Experimental results are presented in Section \ref{sec:exp} and conclusions are drawn in Section \ref{sec:conc}. 

\section{System Model}\label{sec:system}
Consider a set $ \mathcal{N} $ of $ n $ agents, each represented as a node in Figure \ref{fig:BGAN}. Every agent $ i $ owns a unique dataset $ \mathcal{D}_i $ which follows a distribution $ p_{\text{data}_i} $. We define a set $\mathcal{D}$, represented as the outer circle in Figure \ref{fig:BGAN}, which is the union of all datasets $\mathcal{D}_1$, $\mathcal{D}_2$, $\dots$, $\mathcal{D}_n$. This total available data follows a distribution $ p_{\text{data}} $, which is the target distribution that our model aims to mimic. {It is worth noting that the connectivity pattern shown in Figure \ref{fig:BGAN} is a simplified representation for illustrative purposes. In practice, the connectivity can be any configuration as deemed suitable for the application.} For each agent $ i $, $ p_{\text{data}_i} $ is a data distribution that does not span the entire data space as good as $ p_\text{data} $. In this model, every agent $ i  $ tries to learn a \emph{generator} distribution $ p_{g_i} $ over its available dataset $ \mathcal{D}_i $ that can be close as possible to $p_{\text{data}}  $. To learn $ p_{g_i} $ at every agent $ i \in \mathcal{N} $, we define a prior input noise $ z $ with distribution $ p_{z_i}(z) $ and a mapping $G_i(z,\boldsymbol{\theta}_{g_i}) $ from this random variable $ z $ to the data space, where $ G_i $ is a DNN with a vector of parameters $ \boldsymbol{\theta}_{g_i} $. For every agent $i \in \mathcal{N}$, we also define another DNN called \emph{discriminator} $ D_i(\boldsymbol{x},\boldsymbol{\theta}_{d_i}) $ with vector of parameters {$ \boldsymbol{\theta}_{d_i} $} that gets a data sample $ \boldsymbol{x} $ as an input and outputs a value between $ 0 $ and $ 1 $. When the output of the discriminator is closer to $ 1 $, then the received data sample is deemed to be real and when the output is closer to $ 0 $ it means the received data is fake. In our BGAN's architecture, the goal is to find the distribution of the total data, $ p_\text{data} $ under the constraint that no agent $ i $ will share its available dataset $ \mathcal{D}_i$ and DNN parameters $ \boldsymbol{\theta}_{d_i} $ and $ \boldsymbol{\theta}_{g_i} $ with other agents. In contrast to MDGAN, FLGAN, and F2U, we will propose an architecture which is fully distributed and does not require any central controller. In BGANs, the agents only share their \emph{ideas} about the data distribution with an idea being defined as the output of  $ G_i(z,\boldsymbol{\theta}_{g_i})  $, at every epoch of the training phase with the other agents. Then, they will use the shared ideas to brainstorm and collaboratively learn the required data distribution.

While every agent's generator DNN tries to generate data samples close to the real data, the discriminator at every agent aims at discriminating the fake data samples from the real data samples that it owns. Hence, we model these interactions between the generators and discriminators of the agents by a game-theoretic framework. For a standalone agent $ i $ that does not communicate with other agents, one can define a zero-sum game between its generator and discriminator such that its local \emph{value} function is \cite{NIPS2014_5423}:
\begin{align}
	\tilde{V}_i (D_i,G_i) &= \mathbb{E}_{\boldsymbol{x}\sim p_{\textrm{data}_i}}\left[\log D_i(x)\right] \nonumber \\&+ \mathbb{E}_{\boldsymbol{z}\sim p_{z_i}}\left[\log (1 - D_i(G_i(z)))\right].\label{eq:localvalue}
\end{align}
\begin{figure}[!t]
	\centering
	\includegraphics[width=\columnwidth]{./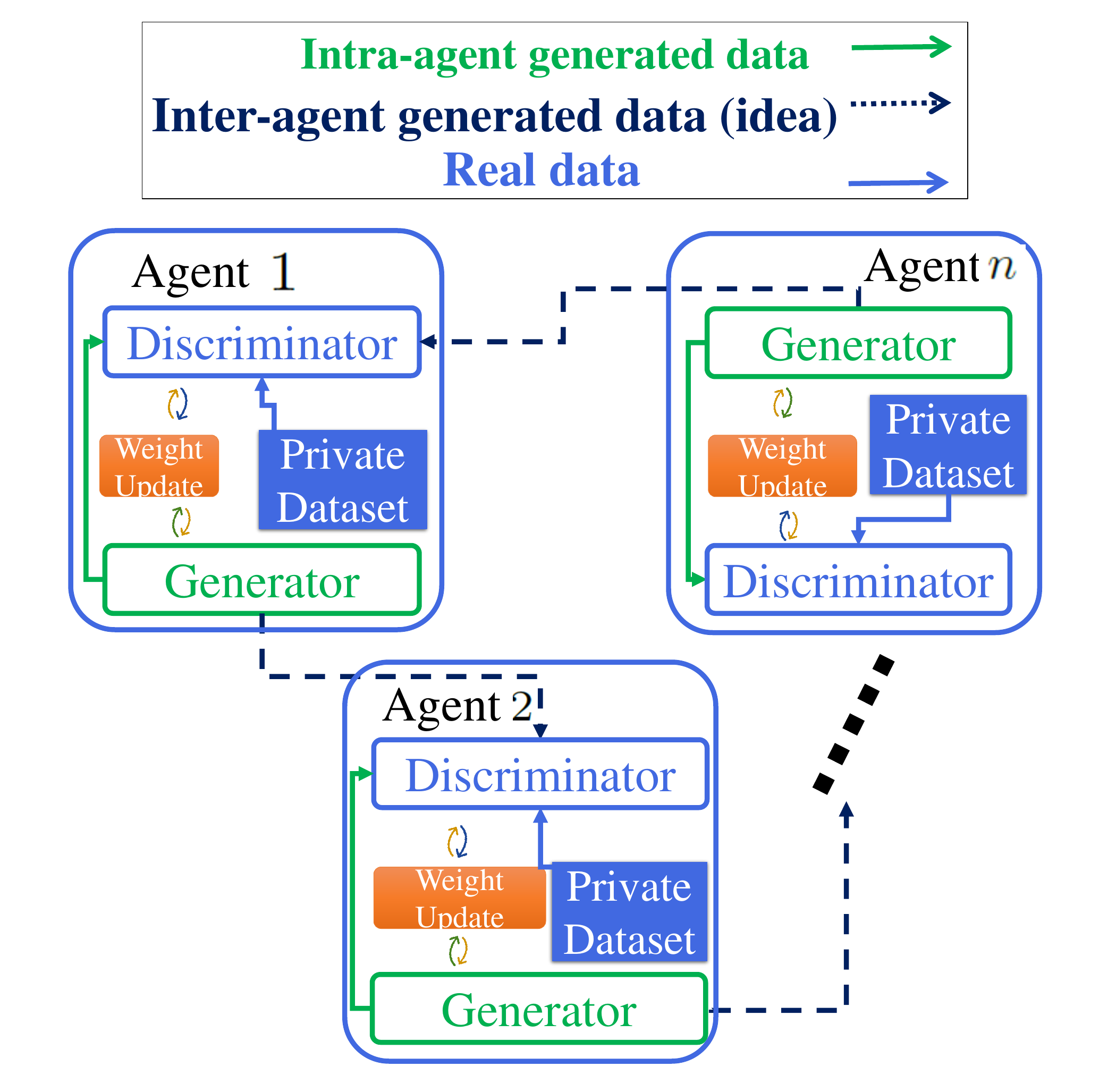}
	\caption{The BGAN architecture.}
	\label{fig:BGAN}
\end{figure}	
In \eqref{eq:localvalue}, the first term forces the discriminator to produce values equal to 1 for the real data. On the other hand, the second term penalizes the data samples generated by the generators. Therefore, the agent's generator aims at minimizing the value function while its discriminator tries to maximize this value. It has been proven in \cite{NIPS2014_5423} that the NE for this game happens when $ p_{g_i} = p_{\text{data}_i} $ and $ D_i = 0.5 $. At the NE, the discriminator cannot distinguish between the generated samples and agent $i$'s real data. Although a standalone GAN can learn the representation of its own dataset, if the owned dataset at each agent is not representative of the entire data space, then the standalone agent will learn a distribution $ p_{g_i}$ that is not exactly the actual data representation. For instance, if an agent has a limited number of data samples that do not span the entire data space, then the learned distribution will be inaccurate \cite{goodfellow2016deep}. In order to cope with this problem, we next introduce BGAN, in which every agent will only share their idea, i.e., their generated points with other neighboring agents without actually sharing their dataset. 

\section{Brainstorming Generative Adversarial Networks Architecture}\label{sec:BGAN}
Let $ \mathcal{N}_i $ be the set of neighboring agents from whom agent $i$ receives ideas, let $ \mathcal{O}_i $ be the neighboring agents to whom agent $i$ sends ideas, and let $ \mathcal{G} $ be the directed graph of connections between the agents as shown in Figure \ref{fig:BGAN}. Here, a neighboring agent for agent $i$ is defined as an agent that is connected to agent $ i $ in the connection graph $ \mathcal{G} $ via a direct link. For our BGAN architecture, we propose to modify the classical GAN value function in \eqref{eq:localvalue} into a \emph{brainstorming value function} which integrates the received generated data samples (ideas) from other agents, as follows:
\begin{align}
	V_i (D_i,G_i,\{G_j\}_{j\in \mathcal{N}_i}) &= \mathbb{E}_{\boldsymbol{x}\sim p_{b_i} }\left[\log D_i(x)\right]\nonumber\\& + \mathbb{E}_{\boldsymbol{z}\sim p_{z_i}}\left[\log (1 - D_i(G_i(z)))\right],\label{eq:Bvalue}
\end{align}
where $ p_{b_i} $ is a mixture distribution of agent $ i $'s owned data and the idea that agent $i$ received from all neighboring agents. Formally, $
p_{b_i} = \pi_i p_{\textrm{data}_i}+ \sum_{j \in \mathcal{N}_i} \pi_{ij}p_{g_j}, $
where $ \pi_i + \sum_{j \in \mathcal{N}_i} \pi_{ij} = 1 $. $ \pi_i $ and $ \pi_{ij} $ represent, respectively, the importance of agent $ i $'s own data and neighbor $ j $'s generated data in the process of brainstorming. Such values can be assigned proportionally to the number of real data samples each agent owns since an agent having more data samples has more information about the data space. From \eqref{eq:Bvalue}, we can see that the brainstorming value functions of all agents in $\mathcal{N}$ will be interdependent. Therefore, in order to find the optimal values for $ G_i $ and $ D_i $, we define a multi-agent game between the discriminators and generators of agents. In this game, the generators collaboratively aim at generating real-like data to fool all of the discriminators while the discriminators try to distinguish between the generated and real data samples of the generators. To this end, we define the total utility function as follows:
\begin{align}\label{eq:totalutility}
	V\left(\left\{D_i\right\}_{i=1}^n,\left\{G_i\right\}_{i=1}^n\right) = \sum_{i=1}^{n}V_i (D_i,G_i,\{G_j\}_{j\in \mathcal{N}_i}).
\end{align}
In our BGAN, the generators aim at minimizing the total utility function defined in \eqref{eq:totalutility}, while the discriminators try to maximize this value. Therefore, the optimal solutions for the discriminators and generators can be derived as follows:
\begin{align}\label{eq:minimax}
	\left\{D_i^*\right\}_{i=1}^n,\left\{G_i^*\right\}_{i=1}^n = \arg\min_{G_1,\dots,G_n}\arg\max_{D_1,\dots,D_n} V,
\end{align}
where for notational simplicity we omit the arguments of $ V\left(D_1,\dots,D_n,G_1,\dots,G_n\right) $. In what follows, we derive the NE for the defined game and characterize the optimal values for the generators and discriminators. At such NE, none of the agents can get a higher value from the game if it changes its generator and discriminator while other agents keeping their NE generator and discriminators.
\begin{proposition}\label{proposition:discriminator}
	For any given set of generators, $ \left\{{G}_1,\dots,{G}_n\right\} $, the optimal discriminator is:
	\begin{align}\label{eq:Dmax}
		D^*_i = \frac{p_{b_i}}{p_{b_i} + p_{g_i}}.
	\end{align}
\end{proposition}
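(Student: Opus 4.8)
The plan is to mirror the classical optimal-discriminator argument of \cite{NIPS2014_5423}, adapted to the brainstorming value function \eqref{eq:Bvalue}. The first observation I would make is a \emph{decoupling} one: in the total utility \eqref{eq:totalutility}, the discriminator $D_i$ appears only inside the single summand $V_i$, while every other term $V_j$ (for $j \neq i$) depends on $D_j$ and on generators alone. Consequently, for a fixed profile of generators, maximizing $V$ jointly over $\{D_1,\dots,D_n\}$ separates into $n$ independent problems, and it suffices to maximize each $V_i$ over its own discriminator $D_i$.

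Next I would rewrite $V_i$ as a single integral over the data space. The second expectation in \eqref{eq:Bvalue} is taken over the noise $z \sim p_{z_i}$, but since $G_i$ pushes $p_{z_i}$ forward to the generator distribution $p_{g_i}$, a change of variables gives $\mathbb{E}_{z\sim p_{z_i}}[\log(1 - D_i(G_i(z)))] = \int_{\boldsymbol{x}} p_{g_i}(\boldsymbol{x})\log(1 - D_i(\boldsymbol{x}))\,d\boldsymbol{x}$. Writing the first term likewise as $\int_{\boldsymbol{x}} p_{b_i}(\boldsymbol{x})\log D_i(\boldsymbol{x})\,d\boldsymbol{x}$, I obtain
\begin{align}
V_i = \int_{\boldsymbol{x}}\Bigl[\,p_{b_i}(\boldsymbol{x})\log D_i(\boldsymbol{x}) + p_{g_i}(\boldsymbol{x})\log\bigl(1 - D_i(\boldsymbol{x})\bigr)\Bigr]\,d\boldsymbol{x}.
\end{align}

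The final step is a pointwise maximization of the integrand. For each fixed $\boldsymbol{x}$, setting $a = p_{b_i}(\boldsymbol{x})$, $b = p_{g_i}(\boldsymbol{x})$ and $y = D_i(\boldsymbol{x}) \in [0,1]$, the integrand is the scalar map $y \mapsto a\log y + b\log(1-y)$. Differentiating and equating to zero yields $a/y = b/(1-y)$, i.e.\ the unique interior maximizer $y^\star = a/(a+b)$, which is a maximum since the map is strictly concave on $(0,1)$ whenever $(a,b) \neq (0,0)$. Substituting back gives $D_i^\star(\boldsymbol{x}) = p_{b_i}(\boldsymbol{x})/\bigl(p_{b_i}(\boldsymbol{x}) + p_{g_i}(\boldsymbol{x})\bigr)$, which is exactly \eqref{eq:Dmax}.

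The only genuine subtlety — and the step I would treat most carefully — is the legitimacy of the pointwise maximization. This requires, exactly as in \cite{NIPS2014_5423}, assuming the discriminator has sufficient representational capacity so that the pointwise-optimal function is realizable within the admissible class of discriminators; note also that the value of $D_i$ off the support of $p_{b_i} + p_{g_i}$ does not affect $V_i$ and may be set arbitrarily. Under this standard capacity assumption the separated maximization is valid, and the displayed $D_i^\star$ is the optimal discriminator for every agent $i$.
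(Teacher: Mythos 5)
Your proof is correct and takes essentially the same approach as the paper's: decouple the maximization of \eqref{eq:totalutility} over the $D_i$'s since each $D_i$ appears in exactly one summand, change variables from $z$ to $\boldsymbol{x}$ so that $V_i$ becomes a single integral in $p_{b_i}$ and $p_{g_i}$, and maximize the integrand pointwise to obtain \eqref{eq:Dmax}. Your extra details --- the explicit concavity argument for $y \mapsto a\log y + b\log(1-y)$ and the discriminator-capacity caveat --- simply make rigorous what the paper leaves implicit.
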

\begin{proof}
	For any given set of generators, $ \left\{{G}_1,\dots,{G}_n\right\} $, we can derive the probability distribution functions for the generators, $ \left\{p_{{g}_1},\dots,p_{{g}_n}\right\} $. Thus, we can write the total utility function as:
	\begin{align}
		V &= \sum_{i=1}^{n}\Bigg[ \int_{x}p_{b_i}(x)\log D_i(x)dx \nonumber\\
		&+ \int_{z}p_{z_i}(z)\log\left(1-D_i\left({G}_{i}\left(z\right)\right)\right)dz\Bigg]\nonumber\\
		&=\sum_{i=1}^{n}\Bigg[ \int_{x}\Big(p_{b_i}(x)\log D_i(x)\nonumber\\
		&+ p_{{g}_i}(x)\log\left(1-D_i\left(x\right)\right)\Big)dx\Bigg],\label{eq:sumval}
	\end{align}
	{where {$p_{g_i}$} is the probability distribution of the generated data for agent $i$ when discriminators reach an optimal state.} Next, to find the maximum of \eqref{eq:sumval} with respect to all of the $ D_i $ values, we can separate every term of the summation in \eqref{eq:sumval} because each term contains $ D_i $ for a single agent $ i $. Thus, the optimal value, $ D^*_i $, is the solution of the following problem:
	\begin{align}
		D_i^* &= \arg\max_{D_i} \int_{x}\Big(p_{b_i}(x)\log D_i(x)\nonumber\\&+ p_{{g}_i}(x)\log\left(1-D_i\left(x\right)\right)\Big)dx.\label{eq:singleD}
	\end{align}
	In order to find the $ D_i $ that maximizes \eqref{eq:singleD}, we can find the value of $ D_i $ that maximizes the integrand of \eqref{eq:singleD}. which is given in \eqref{eq:Dmax}.
\end{proof}

Having found the maximizing values for the discriminators, we can move to the minimization part of \eqref{eq:minimax}. To this end, using \eqref{eq:Dmax}, we can rewrite \eqref{eq:minimax} as follows:
\begin{align}
	G_1^*,\dots,G_n^* = \arg\min_{G_1,\dots,G_n}\underbrace{V(D^*_1,\dots,D^*_n,G_1,\dots,G_n)  }_{W(G_1,\dots,G_n)}.
\end{align}
Now, we can express $ W $ as follows:
\begin{align}
	W &= \sum_{i=1}^{n}\Bigg[ \mathbb{E}_{\boldsymbol{x}\sim p_{b_i}}\left[\log D^*_i(x)\right] \nonumber\\
	&+ \mathbb{E}_{z\sim p_{z_i}}\left[\log\left( 1- D^*_i(G_i(z))\right)\right]\Bigg]\nonumber\\
	&= \sum_{i=1}^{n}\Bigg[ \mathbb{E}_{\boldsymbol{x}\sim p_{b_i}}\left[\log D^*_i(\boldsymbol{x})\right]\nonumber \\
	&+  \mathbb{E}_{x\sim p_{g_j}}\left[\log\left( 1- D^*_i(\boldsymbol{x})\right)\right]\Bigg]\nonumber\\
	&= \sum_{i=1}^{n}\Bigg[ \mathbb{E}_{\boldsymbol{x}\sim p_{b_i}}\left[\log \left(\frac{p_{b_i}}{p_{g_i} + p_{b_i}}\right)\right] \nonumber\\
	&+ \mathbb{E}_{x\sim p_{g_i}}\left[\log\left(\frac{p_{g_i}}{p_{g_i} + p_{b_i}}\right)\right]\Bigg].\label{eq:minimization}
\end{align}

Next, we derive the global minimum of $ W(G_1,\dots,G_n) $.
\begin{theorem}\label{theorem:optimalSigma}
	The global minimum of  $ W(G_1,\dots,G_n) $ can be achieved at the solution of the following equation:
	\begin{align}\label{eq:OptimalSol}
		\left(\boldsymbol{I}-\boldsymbol{B}\right)\boldsymbol{p}_g = \boldsymbol{C}\boldsymbol{P}_{\textrm{data}},
	\end{align}
	where $ \boldsymbol{B} $ is a matrix with element $ \pi_{ij} $ at every row $ i $ and column $ j $, $ \boldsymbol{C} $ is a diagonal matrix whose $ i $-th diagonal element is $ \pi_i $, $ \boldsymbol{p}_g \triangleq \left[p_{g_1},\dots,p_{g_n}\right]^T $, and $ \boldsymbol{P}_{\textrm{data}} \triangleq \left[p_{\textrm{data}_1},\dots,p_{\textrm{data}_n}\right]^T $.
\end{theorem}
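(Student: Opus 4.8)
The plan is to follow the Jensen-Shannon divergence analysis of the single-GAN objective in \cite{NIPS2014_5423}, but to apply it termwise to the coupled sum in \eqref{eq:minimization} and then read off the minimizer as the solution of a linear system in the generator densities.

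First I would treat each summand of $W$ in isolation. For a fixed agent $i$, the integrand pairs $p_{b_i}$ against $p_{g_i}$ in exactly the canonical form $\int p\log\frac{p}{p+q}\,dx + \int q\log\frac{q}{p+q}\,dx$. Recalling that the Jensen-Shannon divergence is $\mathrm{JSD}(p\|q) = \tfrac{1}{2}\mathrm{KL}\!\left(p\|\tfrac{p+q}{2}\right) + \tfrac{1}{2}\mathrm{KL}\!\left(q\|\tfrac{p+q}{2}\right)$ and extracting the two $\log 2$ factors, this quantity equals $2\,\mathrm{JSD}(p_{b_i}\|p_{g_i}) - \log 4$. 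Summing over $i$ yields the decomposition $W = -n\log 4 + 2\sum_{i=1}^{n}\mathrm{JSD}(p_{b_i}\|p_{g_i})$. Since every divergence is non-negative and vanishes only when its two arguments agree, this gives the bound $W \ge -n\log 4$, with equality exactly when $p_{g_i} = p_{b_i}$ holds for all $i$ simultaneously. Substituting the mixture $p_{b_i} = \pi_i p_{\mathrm{data}_i} + \sum_{j\in\mathcal{N}_i}\pi_{ij}p_{g_j}$ into $p_{g_i}=p_{b_i}$ produces, for each $i$, the relation $p_{g_i} - \sum_{j\in\mathcal{N}_i}\pi_{ij}p_{g_j} = \pi_i p_{\mathrm{data}_i}$; stacking these over $i$ and identifying $\boldsymbol{B}$ and $\boldsymbol{C}$ as defined gives precisely \eqref{eq:OptimalSol}.

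The step I expect to be the main obstacle is justifying that the termwise floor of $-\log 4$ can be attained \emph{jointly}. Unlike the classical single-GAN setting, the minimization here is not separable: the reference distribution $p_{b_i}$ appearing in the $i$-th divergence itself depends on the neighbors' generators $p_{g_j}$, so decreasing one divergence by adjusting $p_{g_j}$ simultaneously perturbs $p_{b_i}$ for every agent that receives ideas from $j$. The resolution is the fixed-point viewpoint: the $n$ divergences can be driven to zero together if and only if the densities are mutually consistent, which is exactly the content of the coupled equation \eqref{eq:OptimalSol}. I would therefore conclude by verifying that any solution of \eqref{eq:OptimalSol} whose entries are legitimate probability densities attains $W=-n\log 4$ and is hence a global minimizer, deferring the separate questions of existence, nonnegativity, normalization, and uniqueness of that solution to the connectivity properties of $\boldsymbol{B}$ and the subsequent Nash equilibrium uniqueness argument.
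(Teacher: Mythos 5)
Your proposal follows essentially the same route as the paper's own proof: decompose $W$ termwise into $-n\log 4$ plus a sum of Jensen--Shannon divergences, invoke nonnegativity of each divergence, and read off the equality condition $p_{g_i}=p_{b_i}$ for all $i$, which stacks into the linear system \eqref{eq:OptimalSol}. If anything, your write-up is slightly more careful than the paper's: you keep the correct factor of $2$ in front of the divergences (the paper drops it, harmlessly), and you explicitly flag the coupling issue --- that the termwise floor is attained jointly only at a mutually consistent fixed point --- which the paper glosses over by simply assuming $p_{g_i}=p_{b_i}$ and verifying $W=-n\log 4$ there.
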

\begin{proof}
	We can rewrite $ W $ as follows:
	\begin{align}
		W &= \sum_{i=1}^{n} \mathbb{E}_{\boldsymbol{x}\sim p_{b_i}}\left[\log \left(\frac{p_{b_i}}{p_{g_i} + p_{b_i}}\right)\right] \nonumber \\
		&+  \mathbb{E}_{x\sim p_{g_i}}\left[\log\left(\frac{p_{g_i}}{p_{g_i} + p_{b_i}}\right)\right]\nonumber\\
		& = \sum_{i=1}^{n} \mathbb{E}_{\boldsymbol{x}\sim p_{b_i}}\left[\log \left(\frac{1}{2}\frac{p_{b_i}}{\frac{p_{g_i} + p_{b_i}}{2}}\right)\right] \nonumber \\
		&+  \mathbb{E}_{x\sim p_{g_i}}\left[\log\left(\frac{1}{2}\frac{p_{g_i}}{\frac{p_{g_i} + p_{b_i}}{2}}\right)\right]\nonumber\\
		& = \sum_{i=1}^{n}\left[-\log(4) + \textrm{JSD}(p_{b_i}||p_{g_i})\right]\nonumber\\
		& = - n \log(4) + \sum_{i=1}^{n} \textrm{JSD}(p_{b_i}||p_{g_i}),\label{eq:JSD}
	\end{align}
	where $ \textrm{JSD}(p_{b_i}||p_{g_i}) $ is the JSD between $ p_{b_i}$ and $ p_{g_i} $ with a minimum at $ 0 $. Therefore, we can easily show that $ \sum_{i=1}^{n} \textrm{JSD}(p_{b_i}||p_{g_i})\geq 0 $. Now, given that the minimum of $ W $ occurs when $ p_{g_i} = p_{b_i} $ for $ i \in \mathcal{N} $, we will have: 
	\begin{align}
		p_{g_i} = \pi_i p_{\textrm{data}_i}+ \sum_{j \in \mathcal{N}_i} \pi_{ij}p_{g_j}, \,\, \forall i \in \mathcal{N},
	\end{align}
	which can be simplified to \eqref{eq:OptimalSol} if we move the term $ \sum_{j \in \mathcal{N}_i} \pi_{ij}p_{g_j} $ to the left side of the equation. In this case, we will have $ D_i^* = \frac{p_{g_i}}{2p_{g_i}} = \frac{1}{2} $. Thus, $ W $ will be simplified to:
	\begin{align}
		W &= \sum_{i=1}^{n}\left[ \mathbb{E}_{\boldsymbol{x}\sim p_{b_i}}\left[\log \left(\frac{1}{2}\right)\right] + \mathbb{E}_{x\sim p_{g_i}}\left[\log\left(\frac{1}{2}\right)\right]\right]\nonumber\\
		&= -n\log(4).\label{eq:minimum}
	\end{align}
	By comparing \eqref{eq:JSD} with \eqref{eq:minimum}, we can see that the solution of \eqref{eq:OptimalSol} yields $ \sum_{i=1}^{n} \textrm{JSD}(p_{b_i}||p_{g_i}) = 0 $, thus, minimizes $ W $. 
\end{proof}

Theorem \ref{theorem:optimalSigma} shows that, in order to find the optimal values for $ p_{g_i} $, we need to solve \eqref{eq:OptimalSol}. In the following, we prove that the solution of \eqref{eq:OptimalSol} is unique and is the only minimum of $ W $ and, thus, the game defined in \eqref{eq:minimax} has a unique NE.

\begin{algorithm}[t]
	\caption{BGAN training.}
	\begin{algorithmic}[1]
		\STATE Initialize $ D_i $ and $ G_i $ for $ i \in \mathcal{N} $.
		\STATE \textbf{Repeat:}
		\STATE \quad \textbf{Parallel for} $ i \in \mathcal{N} $:
		\STATE \quad \quad Generate $ b $ samples $ \left\{\boldsymbol{y}_{i}^{(1)},\dots,\boldsymbol{y}_{i}^{(b)}\right\} $ using $ G_i $ and $ p_{z_i} $.
		\STATE \quad \quad \textbf{For} $ j \in \mathcal{O}_i $:  
		\STATE \quad \quad \quad Send $ \pi_{ji}b $ points from the generated samples (ideas), 
		\\ \quad \quad \quad  $ \left\{\boldsymbol{y}_{ji}^{(1)},\dots,\boldsymbol{y}_{ji}^{(\pi_{ji}b)}\right\} $, to agent $ j $.
            \STATE \quad \quad {\textbf{Wait until} ideas from all neighbors are received.}
		\STATE \quad \quad Sample $ \pi_ib $ data samples, $ \left\{\boldsymbol{x}_{i}^{(1)},\dots,\boldsymbol{x}_{i}^{(\pi_ib)}\right\} $,
		\\ \quad \quad from $ \mathcal{D}_i $.
		\STATE \quad \quad Update $ \boldsymbol{\theta}_{d_i} $ by ascending the following gradient:
		\begin{equation}
			\begin{aligned}
				\nabla_{\boldsymbol{\theta}_{d_i}} \frac{1}{b}\Bigg(& \sum_{j \in \mathcal{N}_i}\sum_{k = 1}^{\pi_{ij} b} \log D_i\left(\boldsymbol{y}_{ij}^{(k)}\right) + \sum_{k = 1}^{\pi_{i} b}  \log D_i\left(\boldsymbol{x}_{i}^{(k)}\right)\\ + &\sum_{k=1}^{b}\log(1 - D_i(\boldsymbol{y}_{i}^{(k)}))\Bigg)
			\end{aligned}
		\end{equation}
		\STATE \quad \quad Generate $ b $ samples $ \left\{\boldsymbol{y}_{i}^{(1)},\dots,\boldsymbol{y}_{i}^{(b)}\right\} $ using $ G_i $ \\
		\quad \quad and $ p_{z_i} $.
		\STATE \quad \quad Update $ \boldsymbol{\theta}_{g_i} $ by descending the following gradient:
		\begin{align}
			\nabla_{\boldsymbol{\theta}_{g_i}} \frac{1}{b} \sum_{k=1}^{b}\log(1 - D_i(\boldsymbol{y}_{i}^{(k)}))
		\end{align}
		\STATE \textbf{Until} convergence to the NE
	\end{algorithmic}
	\label{Algorithm:BGAN}
\end{algorithm}

\begin{corollary}\label{cor:uniqueness}
	The defined game between the generators and discriminators of our BGANs has a unique NE where $ \forall i \in \mathcal{N}, D_i = \frac{1}{2} $ and $ p_{g_i} $ is the unique solution of \eqref{eq:OptimalSol}.
\end{corollary}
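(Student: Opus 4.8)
The plan is to split the statement into three parts: (i) the linear system \eqref{eq:OptimalSol} admits a unique solution; (ii) this solution is the \emph{only} global minimizer of $W$; and (iii) the resulting generator/discriminator profile is the unique NE of the game in \eqref{eq:minimax}. Theorem \ref{theorem:optimalSigma} already tells us that a global minimizer of $W$ coincides with a solution of \eqref{eq:OptimalSol}, so the work left is to upgrade ``a solution'' to ``the unique solution'' and to translate this into a statement about the game.

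First I would establish invertibility of $\boldsymbol{I}-\boldsymbol{B}$, which immediately yields uniqueness of the solution to \eqref{eq:OptimalSol}. The key structural fact is that $\boldsymbol{B}$ is entrywise nonnegative and each of its rows sums to $\sum_{j\in\mathcal{N}_i}\pi_{ij}=1-\pi_i$. Since every agent weights its own data strictly positively, $\pi_i>0$, so the induced infinity norm satisfies $\|\boldsymbol{B}\|_\infty=\max_i(1-\pi_i)<1$, and hence the spectral radius obeys $\rho(\boldsymbol{B})\le\|\boldsymbol{B}\|_\infty<1$. Consequently no eigenvalue of $\boldsymbol{B}$ equals $1$, $\boldsymbol{I}-\boldsymbol{B}$ is nonsingular, and \eqref{eq:OptimalSol} has the unique solution $\boldsymbol{p}_g=(\boldsymbol{I}-\boldsymbol{B})^{-1}\boldsymbol{C}\boldsymbol{P}_{\textrm{data}}$. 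I would also verify that this solution is a legitimate profile of densities: writing $(\boldsymbol{I}-\boldsymbol{B})^{-1}=\sum_{k=0}^{\infty}\boldsymbol{B}^k$ (valid since $\rho(\boldsymbol{B})<1$) shows the inverse is entrywise nonnegative, so each $p_{g_i}\ge 0$; and integrating \eqref{eq:OptimalSol} over the data space reduces it to $(\boldsymbol{I}-\boldsymbol{B})\boldsymbol{1}=\boldsymbol{C}\boldsymbol{1}$, whose unique solution is $\boldsymbol{1}$, confirming that each $p_{g_i}$ integrates to one.

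Next I would use the JSD representation \eqref{eq:JSD} to pin down the global minimizer. Since each $\textrm{JSD}(p_{b_i}\|p_{g_i})\ge 0$, we have $W\ge -n\log 4$, with equality exactly when every divergence term vanishes, i.e. $p_{g_i}=p_{b_i}$ for all $i\in\mathcal{N}$. By the definition of $p_{b_i}$ this condition is precisely the system \eqref{eq:OptimalSol}, which by the previous step has a unique solution; hence $W$ attains its global minimum $-n\log 4$ at a single point, establishing uniqueness of the optimal generators. Finally I would close the loop to the discriminators: at this unique minimizer $p_{b_i}=p_{g_i}$, so Proposition \ref{proposition:discriminator} gives $D_i^*=p_{b_i}/(p_{b_i}+p_{g_i})=\tfrac12$ for every $i$. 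Because the discriminators' best response is the closed-form map of Proposition \ref{proposition:discriminator} and the generators' best response is the unique minimizer of $W$, the profile $(\{D_i=\tfrac12\}_{i},\{p_{g_i}\}_{i})$ is the only mutual best-response point, i.e. the unique NE.

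I expect the main obstacle to be the matrix-analytic step. One must argue carefully that $\pi_i>0$ forces $\rho(\boldsymbol{B})<1$ strictly (not merely $\le 1$, which would leave open a possible unit eigenvalue and hence nonuniqueness or nonexistence), and that the induced solution is simultaneously nonnegative and normalized so that it is a genuine density rather than a formal algebraic solution. A secondary subtlety worth flagging is the identification of the minimax solution of \eqref{eq:minimax} with a Nash equilibrium: this is clean here only because the inner maximization over discriminators is solved in closed form by Proposition \ref{proposition:discriminator}, collapsing the game to the single-valued best-response analysis above.
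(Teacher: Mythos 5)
Your proof is correct and follows essentially the same route as the paper: nonsingularity of $\boldsymbol{I}-\boldsymbol{B}$ from the row-sum structure of $\boldsymbol{B}$ (the paper invokes diagonal dominance and the Levy--Desplanques theorem where you bound $\|\boldsymbol{B}\|_\infty \le \max_i(1-\pi_i)<1$, which is the same argument in norm language), then the JSD decomposition \eqref{eq:JSD} to show the unique solution of \eqref{eq:OptimalSol} is the only minimizer of $W$, and finally Proposition \ref{proposition:discriminator} to obtain $D_i=\tfrac{1}{2}$. If anything, your version is slightly more careful than the paper's: you make explicit that $\pi_i>0$ is needed for strictness (the paper states only weak dominance, ``larger than or equal to,'' which by itself does not suffice for Levy--Desplanques), and you verify that the algebraic solution is a genuine density profile (nonnegative and normalized via the Neumann series), a point the paper passes over.
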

\begin{proof}
	Since we know $ \pi_i + \sum_{j \in \mathcal{N}_i} \pi_{ij} = 1 $, then $ \boldsymbol{I} - \boldsymbol{B} $ is a diagonally dominant matrix. In other words, for every row of $ \boldsymbol{I} - \boldsymbol{B} $, the magnitude of the diagonal element in a row is larger than or equal to the sum of the magnitudes of all the non-diagonal elements in that row. Therefore, using the Levy-Desplanques theorem \cite{horn2012matrix} we can show that $ \boldsymbol{I} - \boldsymbol{B} $ is a non-singular matrix and, thus, \eqref{eq:OptimalSol} always has a unique solution. Moreover, since the solution of \eqref{eq:OptimalSol} results in having $ \forall i \in \mathcal{N}:\,\, \textrm{JSD}(p_{b_i}||p_{g_i}) = 0 $, that means it can minimize every term of the summation in \eqref{eq:JSD}, thus, it is the unique minimum point of $ W $. Also, using \eqref{eq:Dmax} the solution of \eqref{eq:OptimalSol} yields $ D_i^* = \frac{1}{2} $ which completes the proof.
\end{proof}

{Theorem \ref{theorem:optimalSigma} plays a pivotal role in our approach, as it provides the conditions under which a unique NE can be found for the game defined in our model. This theorem is crucial in understanding the behavior and results of our approach, as it provides the mathematical basis for finding the optimal values for the generator's distributions, which in turn leads to the unique NE.

In essence, Theorem \ref{theorem:optimalSigma} lays the foundation for our model by showing that the global minimum of $ W(G_1,\dots,G_n) $ can be achieved at the solution of equation \eqref{eq:OptimalSol}. This solution represents the optimal values for $ p_{g_i} $, which are the probability distributions for the generators.

By finding these optimal values, we can ensure that the generators produce data samples that are as close as possible to the real data samples. This, in turn, leads to a unique NE for the game, where the generators and discriminators are balanced, and neither can improve their outcome by unilaterally changing their strategy.

Therefore, Theorem \ref{theorem:optimalSigma} is instrumental in achieving our goal of creating a balanced and effective multi-agent system where the agents collaboratively generate real-like data samples.}

{On the other hand,} Corollary \ref{cor:uniqueness} shows that the defined game between the discriminators and generators has a unique NE. At this NE, the agents can find the optimal value for the total utility function defined in \eqref{eq:minimax}. However, one key goal of our proposed BGAN is to show that using the brainstorming approach each generator can integrate the data distribution of the other agents into its generator distribution. To this end, in the following, we prove that in order to derive a generator that is a function of all of the agents' datasets, the graph of connections between the agents, $ \mathcal{G} $, must be \emph{strongly connected}.
\begin{definition}
	An agent $ i $ can reach an agent $ j $ (agent $ j $ is \emph{reachable} from agent $ i $) if there exists a sequence of neighbor agents which starts with $ i $ and ends with $ j $. 
\end{definition}
\begin{definition}
	The graph $ \mathcal{G} $ is called \emph{strongly connected}, if every agent is reachable from every other agent. 
\end{definition}
\begin{theorem}\label{theorem:reach}
	BGAN agents can integrate the real-data distribution of all agents into their generator if their connection graph, $ \mathcal{G} $, is strongly connected.
\end{theorem}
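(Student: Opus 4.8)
The plan is to leverage the closed-form solution guaranteed by Theorem \ref{theorem:optimalSigma} and Corollary \ref{cor:uniqueness}. Since $\boldsymbol{I}-\boldsymbol{B}$ is nonsingular, the unique equilibrium generators satisfy $\boldsymbol{p}_g = (\boldsymbol{I}-\boldsymbol{B})^{-1}\boldsymbol{C}\boldsymbol{P}_{\textrm{data}}$. The statement that ``agent $i$'s generator integrates every agent's real-data distribution'' then translates precisely into the requirement that row $i$ of the matrix $(\boldsymbol{I}-\boldsymbol{B})^{-1}\boldsymbol{C}$ has a strictly positive entry in every column $j$, so that $p_{g_i} = \sum_{j=1}^n \big[(\boldsymbol{I}-\boldsymbol{B})^{-1}\big]_{ij}\,\pi_j\, p_{\textrm{data}_j}$ genuinely depends on all the $p_{\textrm{data}_j}$. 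Because $\boldsymbol{C}$ is diagonal with $\pi_j>0$, the whole argument reduces to showing that $(\boldsymbol{I}-\boldsymbol{B})^{-1}$ has all-positive entries whenever $\mathcal{G}$ is strongly connected.

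First I would expand $(\boldsymbol{I}-\boldsymbol{B})^{-1}$ as a Neumann series. Each row sum of the nonnegative matrix $\boldsymbol{B}$ equals $\sum_{j\in\mathcal{N}_i}\pi_{ij}=1-\pi_i<1$, so $\|\boldsymbol{B}\|_\infty<1$ and hence the spectral radius satisfies $\rho(\boldsymbol{B})<1$; therefore $(\boldsymbol{I}-\boldsymbol{B})^{-1}=\sum_{k=0}^{\infty}\boldsymbol{B}^k$ converges, with all summands entrywise nonnegative.

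Next I would identify $\boldsymbol{B}$ with the weighted adjacency matrix of $\mathcal{G}$: the entry $\pi_{ij}$ is strictly positive precisely when $j\in\mathcal{N}_i$, i.e., when there is a direct link carrying ideas to agent $i$ from agent $j$. A standard induction then shows that $[\boldsymbol{B}^k]_{ij}=\sum \pi_{i i_1}\pi_{i_1 i_2}\cdots\pi_{i_{k-1} j}$ is strictly positive iff there is a directed walk of length $k$ connecting $j$ to $i$ along the idea-flow edges, i.e., iff agent $i$ is reachable from agent $j$ in exactly $k$ hops. Since the series has nonnegative terms, the $(i,j)$ entry of $(\boldsymbol{I}-\boldsymbol{B})^{-1}$ is strictly positive iff at least one power $\boldsymbol{B}^k$ has a positive $(i,j)$ entry, which happens iff $i$ is reachable from $j$ in $\mathcal{G}$.

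Finally I would combine these facts. If $\mathcal{G}$ is strongly connected, then every agent $i$ is reachable from every agent $j$, so every entry of $(\boldsymbol{I}-\boldsymbol{B})^{-1}$ is strictly positive; multiplying on the right by the positive diagonal $\boldsymbol{C}$ preserves this, whence each $p_{g_i}$ is a strictly positive combination of all the $p_{\textrm{data}_j}$ and thus integrates the real-data distribution of every agent. I expect the main obstacle to be the bookkeeping in the reachability step, in particular pinning down the orientation convention (which index of $\pi_{ij}$ is the receiver and which is the sender) so that ``reachable'' in the sense of the paper's definitions matches ``positive entry of $\boldsymbol{B}^k$,'' together with making explicit that the summation over walk lengths cannot produce cancellations, which holds here only because every term is nonnegative.
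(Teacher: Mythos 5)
Your proposal is correct and follows essentially the same route as the paper: the paper obtains $\boldsymbol{p}_g^* = \left(\sum_{m=1}^{\infty}\boldsymbol{B}^m + \boldsymbol{I}\right)\boldsymbol{C}\boldsymbol{P}_{\textrm{data}}$ via the iterative Jacobi method rather than your direct Neumann-series expansion, and then likewise identifies a nonzero $(i,j)$ entry of $\boldsymbol{B}^m$ with reachability of agent $i$ from agent $j$ in $m$ hops. Your two refinements --- the explicit convergence justification $\|\boldsymbol{B}\|_\infty < 1$ (hence $\rho(\boldsymbol{B})<1$) and the observation that nonnegativity of all terms rules out cancellation across walk lengths --- tighten points the paper leaves implicit, but they do not constitute a different argument.
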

\begin{proof}
	From Corollary \ref{cor:uniqueness}, we know that \eqref{eq:OptimalSol} will always admit a unique solution, $ \boldsymbol{p}_g^* $. In order to derive this solution, we can use the iterative Jacobi method. In this method, for an initial guess on the solution $ \boldsymbol{p}_g^{(0)} $, the solution is obtained iteratively via:
	\begin{align}
		\boldsymbol{p}_g^{(k+1)} = \boldsymbol{I}^{-1}(\boldsymbol{C}\boldsymbol{P}_{\textrm{data}}+\boldsymbol{B}\boldsymbol{p}_g^{(k)}),
	\end{align}
	where $ \boldsymbol{p}_g^{(k)} $ is the $ k $-th approximation on the value of $ \boldsymbol{p}_g^* $. Letting $ \boldsymbol{p}_g^{(0)} = \boldsymbol{0} $, we will have:
	\begin{align}
		\boldsymbol{p}_g^{(k+1)} = (\sum_{m=1}^{k}\boldsymbol{B}^m + \boldsymbol{I})\boldsymbol{C}\boldsymbol{P}_{\textrm{data}}.
	\end{align}
	Therefore, we have:
	\begin{align}
		\boldsymbol{p}_g^* = \lim\limits_{k \rightarrow \infty} \boldsymbol{p}_g^{(k+1)} = (\sum_{m=1}^{\infty}\boldsymbol{B}^m + \boldsymbol{I})\boldsymbol{C}\boldsymbol{P}_{\textrm{data}}.\label{eq:Iter}
	\end{align}
	From \eqref{eq:Iter}, we can see that in order to have $ p_{g_i}^* $ as a function of $ p_{\textrm{data}_j} $, there should be a $ \boldsymbol{B}^m $ whose entry in row $ i $ and column $ j $ is non-zero. This entry of $ \boldsymbol{B}^m $ is non-zero if $ i $ is reachable from $ j $ via $ m $ steps in the graph $ \mathcal{G} $. Therefore, in order to receive information from all of the agent datasets, every agent must be reachable from every other agent which completes the proof.
\end{proof}

Theorem \ref{theorem:reach} shows that, in order to have a BGAN architecture that enables the sharing of data (ideas) between all of the agents, $ \mathcal{G} $ must be strongly connected. In this case, the optimal solution for the generator is a linear mixture of agents real data distribution as follows:
\begin{align}\label{eq:linmix}
	p_{g_i}^* = \sum_{j\in \mathcal{N}}\lambda_{ij}p_{\textrm{data}_j},
\end{align}
where $ \lambda_{ij} $ are some positive-valued constants that can be derived by solving \eqref{eq:OptimalSol}.
However, for the case in which $ \mathcal{G} $ is not strongly connected, the following corollary (whose proof is similar to Theorem 2) shows how the information is shared between the agents.
\begin{corollary}\label{cor:reach}
	A BGAN agent can receive information from every other agent, if it is reachable from that agent.
\end{corollary}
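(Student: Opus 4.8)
The plan is to reuse the Neumann-series solution established in the proof of Theorem \ref{theorem:reach} and read off the per-pair dependence directly, rather than invoking strong connectivity globally. Concretely, I would begin from the closed form \eqref{eq:Iter},
\begin{align}
\boldsymbol{p}_g^* = \left(\sum_{m=1}^{\infty}\boldsymbol{B}^m + \boldsymbol{I}\right)\boldsymbol{C}\boldsymbol{P}_{\textrm{data}},\nonumber
\end{align}
which is valid under the same hypotheses as Theorem \ref{theorem:reach} (nonsingularity of $\boldsymbol{I}-\boldsymbol{B}$ from Corollary \ref{cor:uniqueness}, together with convergence of the Jacobi iteration). Writing out the $i$-th component gives $p_{g_i}^* = \sum_{j\in\mathcal{N}} \lambda_{ij}\, p_{\textrm{data}_j}$ with $\lambda_{ij} = \pi_j\big[\boldsymbol{I} + \sum_{m=1}^{\infty}\boldsymbol{B}^m\big]_{ij}$, since $\boldsymbol{C}$ is diagonal with positive entries $\pi_j$. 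Agent $i$ thus receives information from agent $j$ precisely when $\lambda_{ij}\neq 0$, so the whole corollary reduces to characterizing the support pattern of these coefficients.

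Next I would characterize when $\lambda_{ij}\neq 0$. The diagonal $\boldsymbol{I}$ term contributes the trivial case $i=j$, and for $i\neq j$ the coefficient reduces to $\pi_j \sum_{m=1}^\infty (\boldsymbol{B}^m)_{ij}$. I would then invoke the same combinatorial interpretation used in Theorem \ref{theorem:reach}: the entry $(\boldsymbol{B}^m)_{ij}=\sum B_{i k_1}B_{k_1 k_2}\cdots B_{k_{m-1} j}$ is a sum of products of the mixture weights $\pi_{\cdot\cdot}$ taken along length-$m$ walks, and, exactly as argued there, it is nonzero precisely when $i$ is reachable from $j$ in $m$ steps of $\mathcal{G}$. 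Summing over $m$, we get $\lambda_{ij}\neq 0$ iff $i$ is reachable from $j$ by a walk of some length (with $i=j$ handled by the $\boldsymbol{I}$ term), which is exactly the statement of the corollary.

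The only delicate point — and the step I expect to require the most care — is ruling out cancellation in the infinite sum $\sum_{m=1}^\infty (\boldsymbol{B}^m)_{ij}$. Because the mixture weights satisfy $\pi_{ij}\geq 0$, every entry of every power $\boldsymbol{B}^m$ is nonnegative, so distinct terms cannot cancel; the series coefficient is therefore strictly positive as soon as a single reachable-via-$m$-steps walk exists, and it vanishes when no walk from $j$ to $i$ exists at all. This nonnegativity is what upgrades ``some $(\boldsymbol{B}^m)_{ij}\neq 0$'' to ``$\lambda_{ij}>0$'' without ambiguity. I would also note the contrast with Theorem \ref{theorem:reach}: strong connectivity is the special case in which this reachability condition holds for every ordered pair of agents simultaneously, so the present corollary is simply the per-pair refinement obtained from the same expansion, which is why its proof parallels that of Theorem \ref{theorem:reach}.
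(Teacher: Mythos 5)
Your proof is correct and takes essentially the same route as the paper, which gives no separate argument and simply states the corollary's proof ``is similar to Theorem 2'': you reuse the Jacobi/Neumann expansion $\boldsymbol{p}_g^* = (\sum_{m=1}^{\infty}\boldsymbol{B}^m + \boldsymbol{I})\boldsymbol{C}\boldsymbol{P}_{\textrm{data}}$ and read off the pairwise dependence from the walk interpretation of $(\boldsymbol{B}^m)_{ij}$. Your explicit handling of possible cancellation in the infinite sum (ruled out by nonnegativity of the mixture weights) is a useful detail that the paper leaves implicit.
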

Therefore, in BGANs, the agents can share information about their dataset with every agent which is reachable in $ \mathcal{G} $. In practice, agent $ j $ is reachable from agent $ i $ if there is a \emph{communication path} on the connection graph from agent $ i $ to agent $ j $. Therefore, a BGAN agent can receive information (ideas) from every agent to which it is connected via a communication path. In this case, the optimal generator distribution of each agent $ i $ will be a mixture of the data distributions of the agents which can reach agent $ i $ in graph $ \mathcal{G} $, written as $ p_{g_i}^* = a_i p_{\textrm{data}_i} + \sum_{j \in \mathcal{R}_i} a_{ij} p_{\textrm{data}_j} $. $ \mathcal{R}_i $ is the set of agents that can reach agent $ i $ and $ a_i $ and $ a_{ij} $ come from the solution of \eqref{eq:OptimalSol}.

In order to implement the BGAN architecture, one of the important steps is to integrate the mixture model for each agent in the brainstorming step. For a batch size $ b $ at each training episode, agent $ i $ receives $ \pi_{ij}b $ generated samples from agent $ j \in \mathcal{N}_i $. This approach guarantees that agent $ j $ has $ \pi_{ij} $ contribution in the brainstorming phase compared to other neighbor agents in $ \mathcal{N}_i $. {To ensure all agents are in the same training round and have fresh shared ideas, each agent waits until it has received input from all its neighbors before starting training. This synchronization mechanism ensures concurrent progression through the training rounds, addressing concerns about data freshness.} Algorithm \ref{Algorithm:BGAN} summarizes the steps needed to implement the proposed BGAN architecture. Our BGAN architecture enables a fully distributed learning scheme in which every agent can gain information from all of the other agents without sharing their real data. Next, we showcase the key properties of BGAN by conducting extensive experiments. 

\section{Experiments}\label{sec:exp}
We empirically evaluate the proposed BGAN architecture on data samples drawn from multidimensional distributions as well as image datasets. Our goal here is to show how our BGAN architecture can improve the performance of agents by integrating a brainstorming mechanism compared to standalone GAN scenarios. Moreover, we perform extensive experiments to show the impact of architecture hyperparameters such as the number of agents, number of connections, and DNN architecture. In addition we compare our proposed BGAN architecture with MDGAN, FLGAN, and F2U, in terms of the quality of the generated data as well as the communication resources.


\subsection{Datasets}
For our experiments, we use two types of data samples. The first type which we call the \emph{ring} dataset contains two dimensional samples $ (r\cos\theta,r \sin\theta) $ where $ r \sim \Gamma(\alpha,\beta) $, $ \theta \sim U(0,2\pi) $, and $ \alpha $ and $ \beta $ are chosen differently for multiple experiments. This dataset constitutes a ring shape in two-dimensional space as shown in Figure \ref{fig:ring} and was used since a) the generated points by GAN agents can be visually evaluated, b) the dimensions of each sample has a nonlinear relationship between each other, and c) the stochastic behavior of the data samples is known since they are drawn from a gamma distribution and a uniform distribution and, thus, the JSD between the actual data and the generated data samples can be calculated. We use JSD as the quality measure for the BGAN architecture since, as shown in \cite{NIPS2014_5423} and Theorem \ref{theorem:optimalSigma}, the JSD between the generated data samples and the actual data must be minimized at the NE. The second type of datasets contain the well-known MNIST, fashion MNIST and CIFAR-10 datasets \cite{lecun-mnisthandwrittendigit-2010,xiao2017fashion,krizhevsky2009learning}. These datasets are used to compare BGAN's generated samples with the other distributed GAN architectures such MDGAN, FLGAN, and F2U. 

\begin{figure}
	\centering
	\includegraphics[width=\columnwidth]{./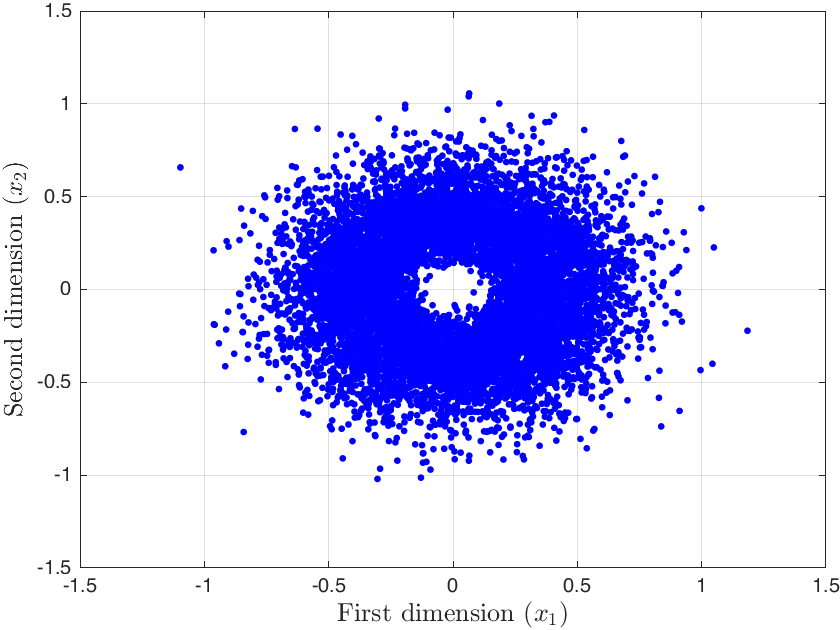}
	\caption{A illustration of the used data samples drawn from a nonlinear combination of gamma and uniform distributions.}
	\label{fig:ring}
\end{figure}  

\subsection{Implementation details}
For every dataset, we use a different DNN architecture depending on the complexity of the dataset. For instance, for the ring dataset we use two simple multi-layer perceptrons with only two dense layers. However, for image datasets that are multi dimensional and are more complex than the ring dataset we use multiple convolutional layers and a dense layer for the discriminator and multiple transposed convolutional layers for the generator. Note that our main goal is to show that our proposed BGAN architecture is fully distributed and communication efficicent. Thus, we do not use sophisticated DNNs as in \cite{radford2015unsupervised,he2016deep,gulrajani2017improved}. However, since the proposed BGAN does not have any restrictions on the GAN architecture, architectures such as those in \cite{radford2015unsupervised,he2016deep,gulrajani2017improved} can naturally be used to achieve higher accuracy in the generated data. In order to train our BGAN, we have used Tensorflow and 8 Tesla P100 GPUs which helped expediting the extensive experiments. We have used multiple batch sizes and the reported values are the ones that had the best performance. Moreover, we have distributed an equal number of samples among the agents and we assign equal values for $ \pi_{ij} $ and $ \pi_i $, unless otherwise stated.

\subsection{Effect of the number of agents and data samples}
GANs can provide better results if they are fed large datasets. Therefore, using the ring dataset, we try to find the minimum number of training samples that is enough for a standalone GAN to learn the distribution of the data samples and achieve a minimum JSD from the dataset. From Figure \ref{fig:numpts}, we can see that the JSD remains constant after 1000 data samples and, hence, in order to showcase the benefits of brainstorming, we will assign less than 1000 samples to each agent to see if brainstorming can reduce the JSD.

For implementing BGAN, we consider a connection graph in which each agent receives data (idea) only from one neighbor and the graph is strongly connected as shown in Figure \ref{fig:graph1}. We implement BGAN with $ 2 $ to $ 10 $ agents with $ 10 $ to $10000$ data samples for each agent. Figure \ref{fig:n2pts} shows the JSD of generated points from the actual dataset for BGAN agents and for a conventional standalone (single) GAN agent for a different number of data samples. Figure \ref{fig:n2pts} shows that, by increasing the number of data samples, the JSD decreases and reaches its minimum value. More importantly, BGAN agents can compensate the lack of data samples that occurs in a standalone GAN by brainstorming with other agents. For instance a standalone agent with 10 data samples has a JSD of 24, however, when the same agent participates in the brainstorming process with 9 other agents (10 agents in total), then, it can achieve a JSD of 13. 

\begin{figure}[t]
	\centering
	\includegraphics[width=\columnwidth]{./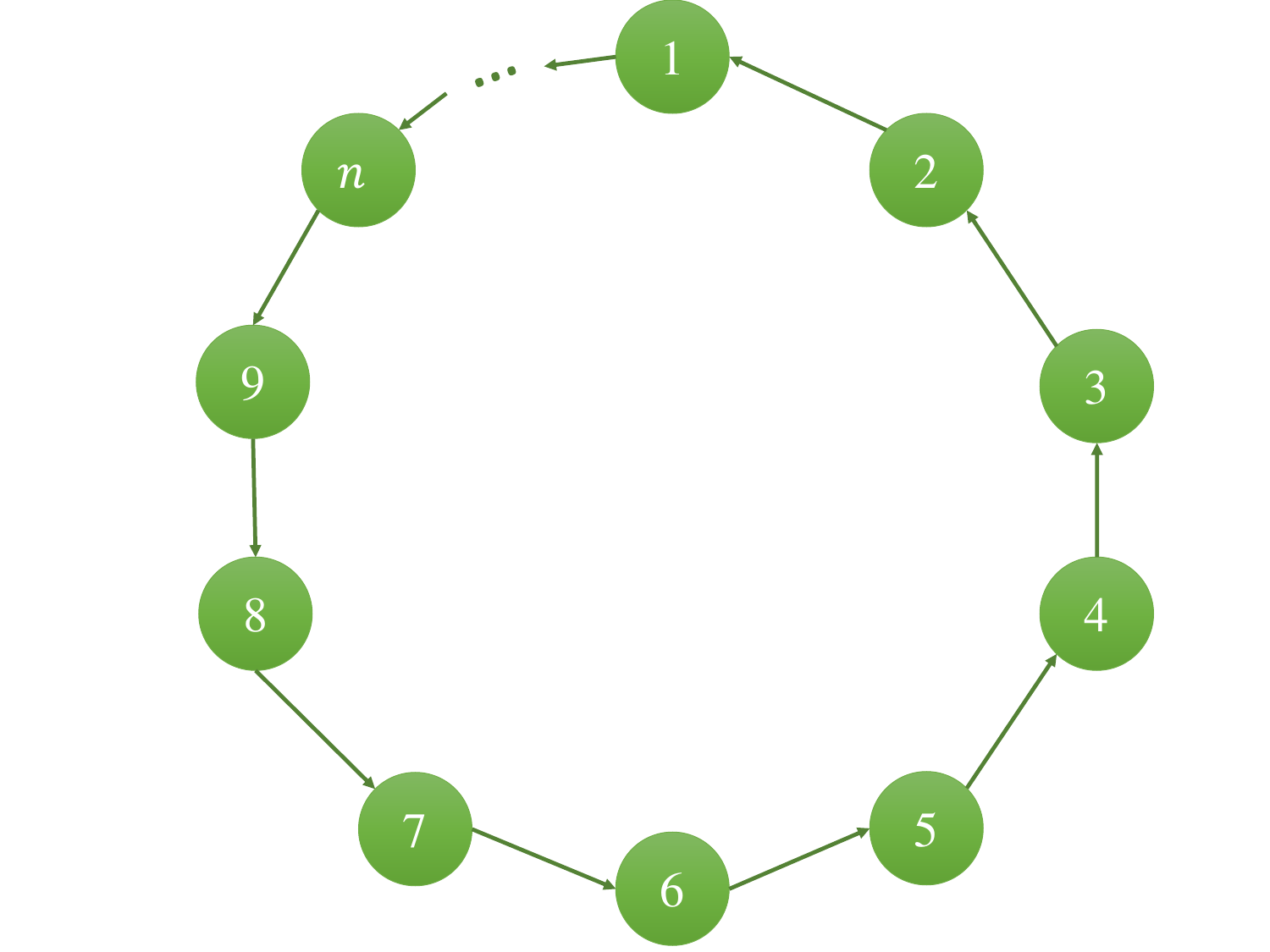}
	\caption{The graph of connections with only one neighbor for each agent that has strong connectivity property.}
	\label{fig:graph1}
\end{figure}  
\begin{figure}[t]
	\centering
	\includegraphics[width=\columnwidth]{./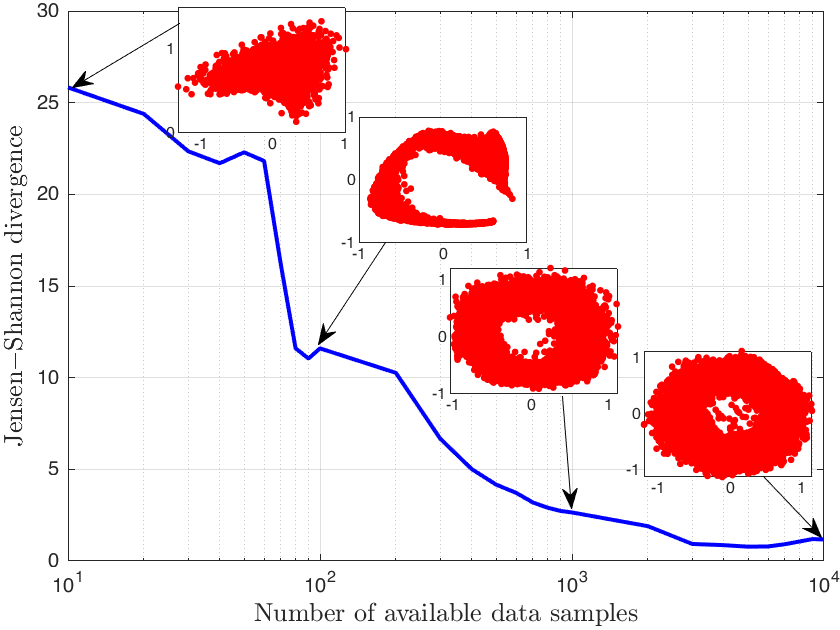}
	\caption{JSD as a function of the number of available samples for a standalone GAN.}
	\label{fig:numpts}
\end{figure}

\begin{figure}[t]
	\centering
	\includegraphics[width=\columnwidth]{./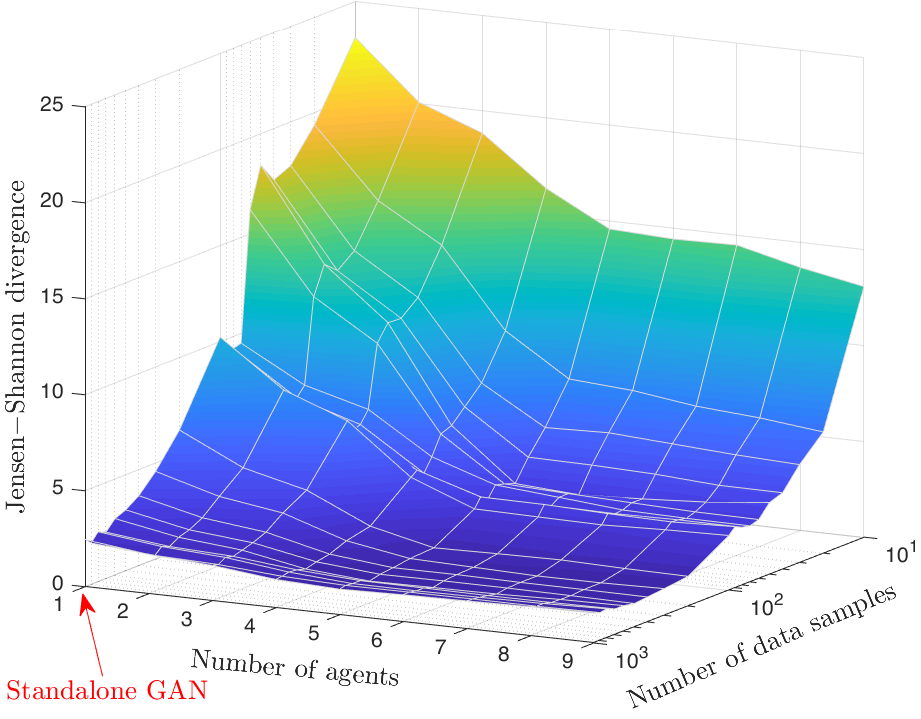}
	\caption{Effect of the number of available samples for each BGAN agent as well as number of agents on the system's JSD.}
	\label{fig:n2pts}
\end{figure} 

Next, in Figure \ref{fig:n2pts_genpts}, we show the generated points by standalone and BGAN agents for different numbers of available data samples. When an agent has access to a small dataset, the GAN parameters will be underfitted, however, brainstorming can still, in some sense, increase the size of dataset by adding the generated samples of neighboring agents into the training set. Therefore, as seen from Figure \ref{fig:n2pts_genpts}, by participating in the brainstorming process, a BGAN agent with a limited dataset can generate data samples that are closer to the actual data distribution in Figure \ref{fig:ring}. This demonstrates that our BGAN architecture can significantly improve the learning and generation performance of agents. 

Note that, no distributed GAN agent can perform better than a standalone GAN that has access to all data samples. The reason that we compare BGAN agents with a standalone GAN agent with the \emph{same} number of data samples is to show how BGAN can help the flow of information between BGAN agents and confirm our theorems. For example, a standalone GAN with 1000 data samples will have a better performance than 10 BGAN agents with 100 data samples each since, in this case, BGAN agents have access to 90\% less real data samples than a standalone GAN, but they try to compensate that lack of data samples by communicating their generated ideas with neighboring agents. Meanwhile, 10 BGAN agents each having 100 data samples have a higher performance on average than 10 standalone GAN agents (who cannot communicate) having 100 samples each. Therefore, a BGAN agent’s average performance is lower bounded by a standalone GAN having the same number of data samples as a single BGAN agent and upper bounded by a standalone GAN having the total number of samples for all BGAN agents. This is a result that has not been shown for other state-of-the-art distributed GANs which is important since it quantifies performance bounds of distributed GAN agents.

\begin{figure}[t]
	\centering
	\includegraphics[width=\columnwidth]{./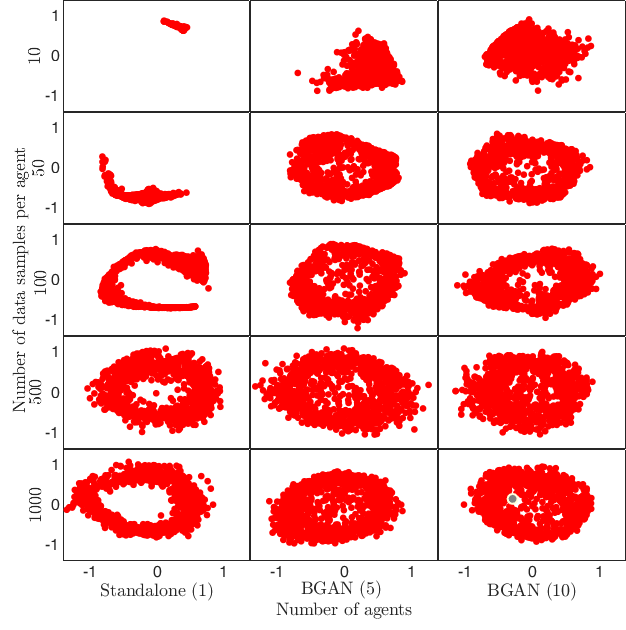}
	\caption{Normalized generated samples of standalone GAN and BGAN with 5 and 10 agents where each agent has access to 10, 50, 100, 500, or 1000 data samples.}
	\label{fig:n2pts_genpts}
\end{figure} 
\subsection{Effect of the connection graph}
In order to study the effect of the connection graph $ \mathcal{G} $ on the performance of BGAN agents we consider two scenarios: 1) A strongly connected graph in which every agent has an equal number of neighbors as shown in Figure \ref{fig:graphn} and 2) a \emph{string} such that all of the agents have only one neighbor except one agent, agent $n$, that does not have any neighbors but sends data to one other agent as shown in Figure \ref{fig:string}.

In the first scenario, we consider 10 agents and we implement the BGAN architecture while varying the number of neighbors for each agent between 1 to 9. We also consider 10, 20, 50, 100, and 1000 data samples for each agent. Figure \ref{fig:con} shows that for cases in which the number of samples is too small (10, 20, and 50) having more neighbors reduces the average JSD for the BGAN agents. However, when the number of samples is large enough adding more neighbors does not affect the JSD. We explain this phenomenon from \eqref{eq:OptimalSol} and \eqref{eq:linmix}, where the optimal generator distribution is shown to be related to the graph structure. For instance, Table \ref{tab:lambda} shows the dependence of each agent's generator distribution on all of the agents' datasets for the cases with 1 and 9 neighbors. From Table \ref{tab:lambda}, we can see that, for the 1-neighbor scenario, as an agent gets farther away from another agent, it will less affect the other agent's generator distribution since $ \lambda_{ij} $ decreases when $ \mod(i-j,10) $ increases. However, for a 9-neighbor scenario, $ \lambda_{ij} $ stays constant for$ \mod(i-j,10)>0 $. Therefore, when the agents have a small dataset and have a higher number of neighbors, they can gain information almost equally from other agents and can span all the data space. As such, as shown in Figure \ref{fig:con}, the JSD of the agent which owns only 10 data samples gets better when the number of neighbors increases. However, when the number of data samples is large then $ p_{\textrm{data}_i} \simeq p_{\textrm{data}} $ for all $ i \in \mathcal{N} $. In this case, $ p_{\textrm{data}_i}\simeq p_{\textrm{data}_i}$ for $ i,j\in \mathcal{N} $ and, thus, from \eqref{eq:linmix} we will have $ p_{g_i} \simeq  p_{\textrm{data}} $ irrespective of the graph structure.

\begin{table}[t!]
	\centering
	\begin{tabularx}{\columnwidth}{c | c | c}
		$ \mod(i-j,10) $ & $\lambda_{ij}$ (1 neighbor)&  $\lambda_{ij}$ (9 neighbors)\\\hline
		0 & $ 0.5005 $ & $ 0.1818 $\\
		1 & $0.2502 $&$ 0.0909 $\\
		2 & $ 0.1251 $&$ 0.0909 $\\
		\vdots & \vdots & \vdots\\
		9 & $0.001 $ & $ 0.0909 $
	\end{tabularx}
	\caption{Effect of graph structure on the generator distribution.}
	\label{tab:lambda}
\end{table}

\begin{figure}[t]
	\centering
	\includegraphics[width=\columnwidth]{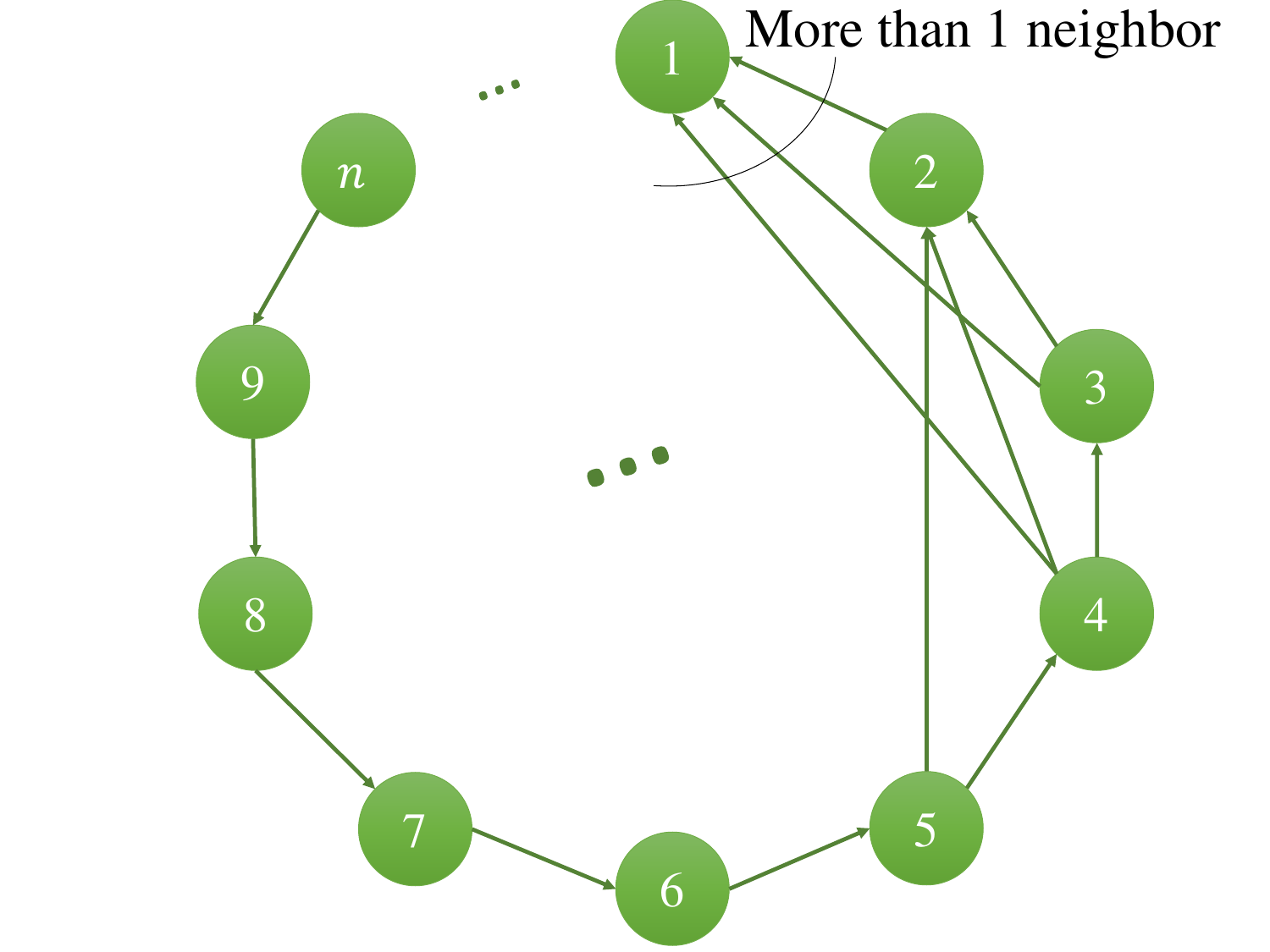}
	\caption{A strongly connected graph with more than 1 neighbor for each agent.}
	\label{fig:graphn}
\end{figure} 
\begin{figure}[t]
	\centering
	\includegraphics[width=0.9\columnwidth]{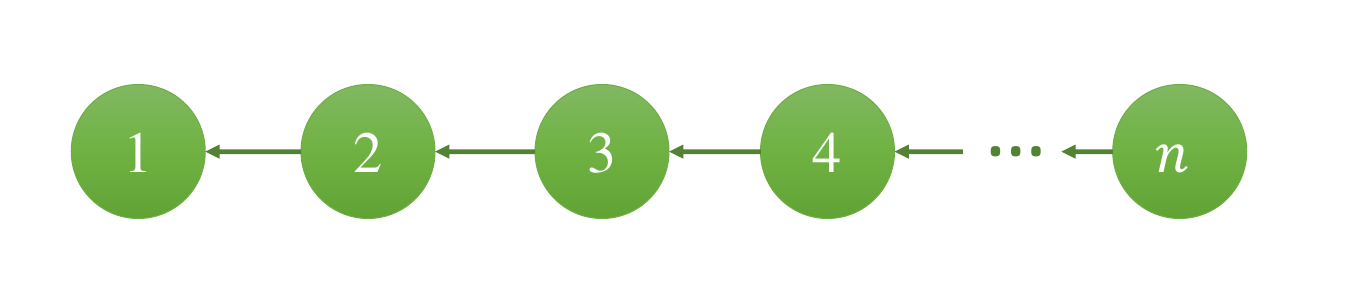}
	\caption{A string graph. All of the agents receive data from only one agent, however, the last agent does not have any neighbors to receive data from.}
	\label{fig:string}
\end{figure} 
\begin{figure}[t]
	\centering
	\includegraphics[width=\columnwidth]{./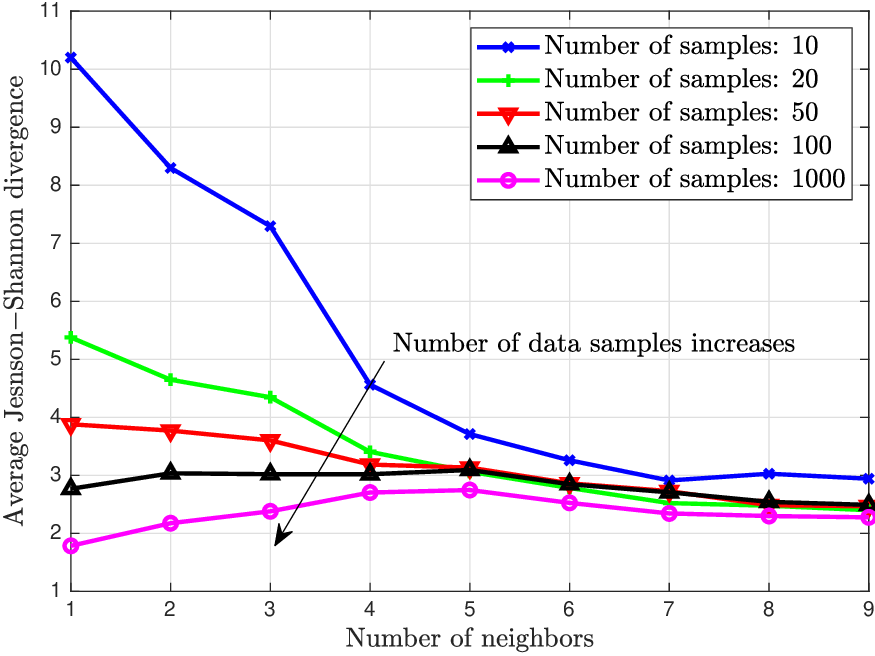}
	\caption{Effect of the number of connections on JSD.}
	\label{fig:con}
\end{figure} 

In the second scenario, we again consider 10 agents. Now, agent 10 sends data to agent 9, agent 9 sends data to agent 8, and so on until agent 2 sends data to agent 1. However, unlike the first scenario, agent 1 does not close the loop and does not send any information to agent 10. Figure \ref{fig:discon} shows the generated samples and JSD of every agent. From Figure \ref{fig:discon}, we can see that the samples generated by agents 1 to 5 are close to the ring dataset and they have a small JSD. However, as we move closer to the end of the string (agent 10), the generated samples diverge from the actual data distribution and the JSD increases. Therefore, in order for each agent to get information from the datasets of all of the agents, the graph should be strongly connected.

\begin{figure}[t]
	\centering
	\includegraphics[width=0.84\columnwidth]{./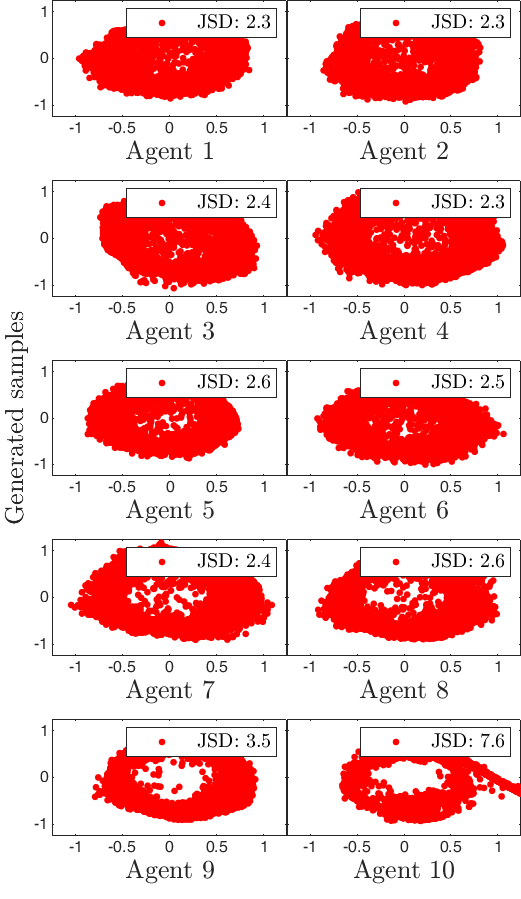}
	\caption{Normalized generated samples of agents of a disconnected BGAN.}
	\label{fig:discon}
\end{figure}

\subsection{Effect of non-overlapping data samples}
Next, we prove that the BGAN agents can learn non-overlapping portions of the other agents' data distributions. To this end, we consider two BGAN agents whereby agent 1 has access to data points only between 0 to $ \pi $ degrees while agent 2 owns the data between $ \pi $ to $ 2\pi $ degrees. Figure \ref{fig:nonoverlap} shows the  available portions of the ring dataset for each agent as well their generated points after brainstorming. As can be seen from Figure \ref{fig:nonoverlap}, the agents can generate points from the ring dataset that they do not own. This showcases the fact that brainstorming helps agents to exchange information between them without sharing their datasets. However, a standalone agent can at best learn to mimic the portion of data that it owns and cannot generate points from the data space that it does not have access to.
\begin{figure}[t]
	\centering
	\includegraphics[width=\columnwidth]{./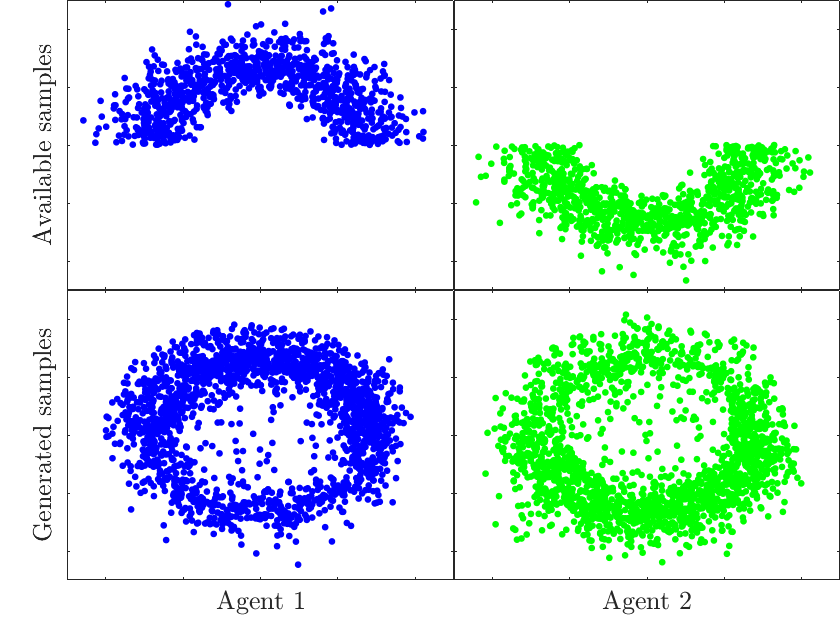}
	\caption{BGAN agents which own nonoverlapping data samples can learn about the other agents' data samples.}
	\label{fig:nonoverlap}
\end{figure} 

\subsection{Effect of different architectures}
We now show how the proposed BGAN can allow the different agents to have a different DNN architecture considering a BGAN with 5 agents whose DNNs differ only in the size of their dense layer. In other words, each agent has a different number of neurons. Moreover, the agents' connection graph is similar to the one in Figure \ref{fig:graphn} with one neighbor. In Figure \ref{fig:diffarch}, we compare the output of the agents resulting from both the standalone and BGAN scenarios. Figure \ref{fig:diffarch} demonstrates that, in the standalone case, agents having denser DNNs will have a lower JSD compared to agents having a smaller number of trainable parameters at the dense layer. However, by participating in brainstorming, all of the GAN agents reduce their JSD and improve the quality of their generated points. This allows agents with lower computational capability to brainstorm with other agents and improve their learned data distribution. Note that this capability is not possible with the other baselines such as FLGAN.
\begin{figure}
	\centering
	\includegraphics[width=0.95\columnwidth]{./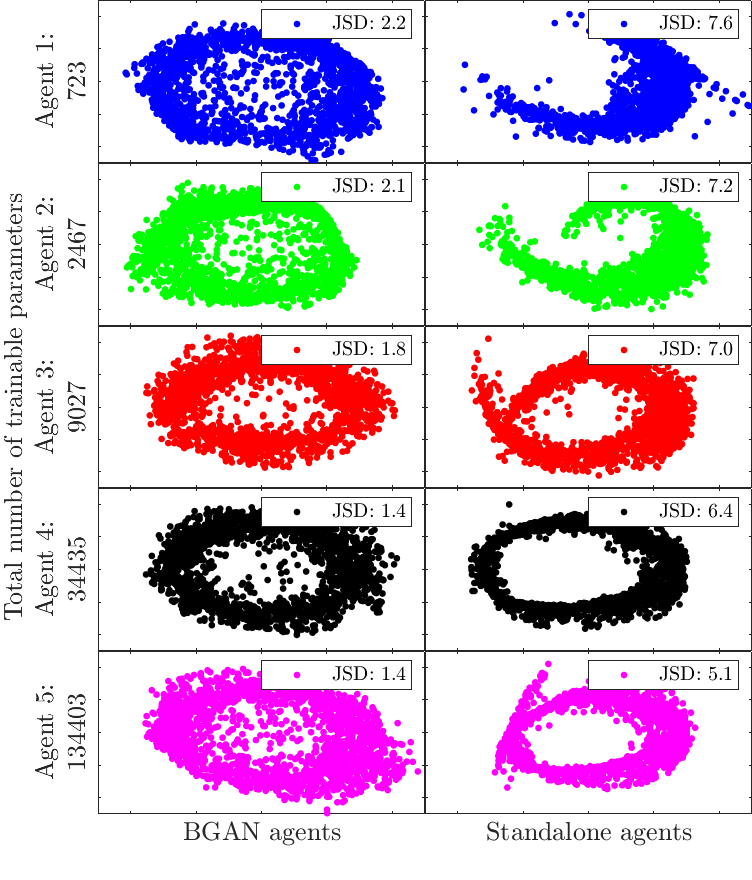}
	\caption{Comparison between the standalone and BGAN agents with different number of trainable parameters.}
	\label{fig:diffarch}
\end{figure}

\subsection{Comparison with FLGAN, MDGAN, and F2U}
In order to compare the generated points by BGAN agents with other state-of-the-art distributed GAN architectures, we use the ring and MNIST dataset. For the ring datasets, we run experiments with 2 to 10 agents where each agent owns only 100 data samples and their connection graph is similar to the one in Figure \ref{fig:graphn} with one neighbor. We compare the average JSD of BGAN agents with the ones resulting from FLGAN, MDGAN, F2U, and standalone GAN agents. We consider $ n\time 100 $ samples for each standalone agent as the upper bound performance of distributed agents and $ 100 $ samples for each standalone agent as the lower bound performance indicator. In other words, no distributed GAN agent can perform better than a standalone GAN that has access to all data samples. On the other hand, distributed architectures should always have a lower JSD compared to a standalone agent with the same number of available data samples.

Figure \ref{fig:jsdcomp} shows the average JSD resulting from the various distributed GAN architectures. We can see from Figure \ref{fig:jsdcomp} that BGAN agents will always have a lower JSD compared to the other distributed GAN agents whereby for the two-agent case, BGAN can achieve a JSD as low as a standalone with 200 data samples. Furthermore, all distributed agents yield a better performance than a standalone agent with 100 samples, however, they cannot achieve a JSD lower than a standalone agent with $ n \times 100 $ samples. In addition, adding more agents reduces the JSD for MDGAN and F2U agents while the JSD of FLGAN and BGAN agents stay constant. For BGAN, we have already seen this fact in Figure \ref{fig:n2pts} where the JSD achieves a minimum value for a particular number of data samples such that adding more agents will not improve the performance of BGAN agents.
\begin{figure}[t]
	\centering
	\includegraphics[width=\columnwidth]{./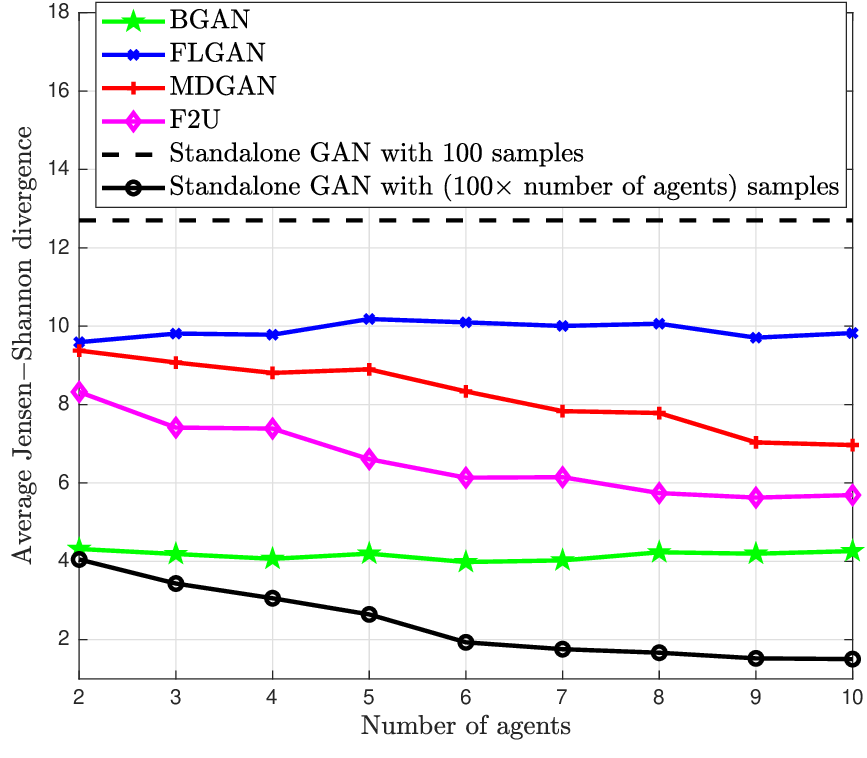}
	\caption{JSD comparison between BGAN, FLGAN, MDGAN, and F2U agents.}
	\label{fig:jsdcomp}
\end{figure} 

\begin{figure*}[t]
	\centering
	\includegraphics[width=0.79\textwidth]{./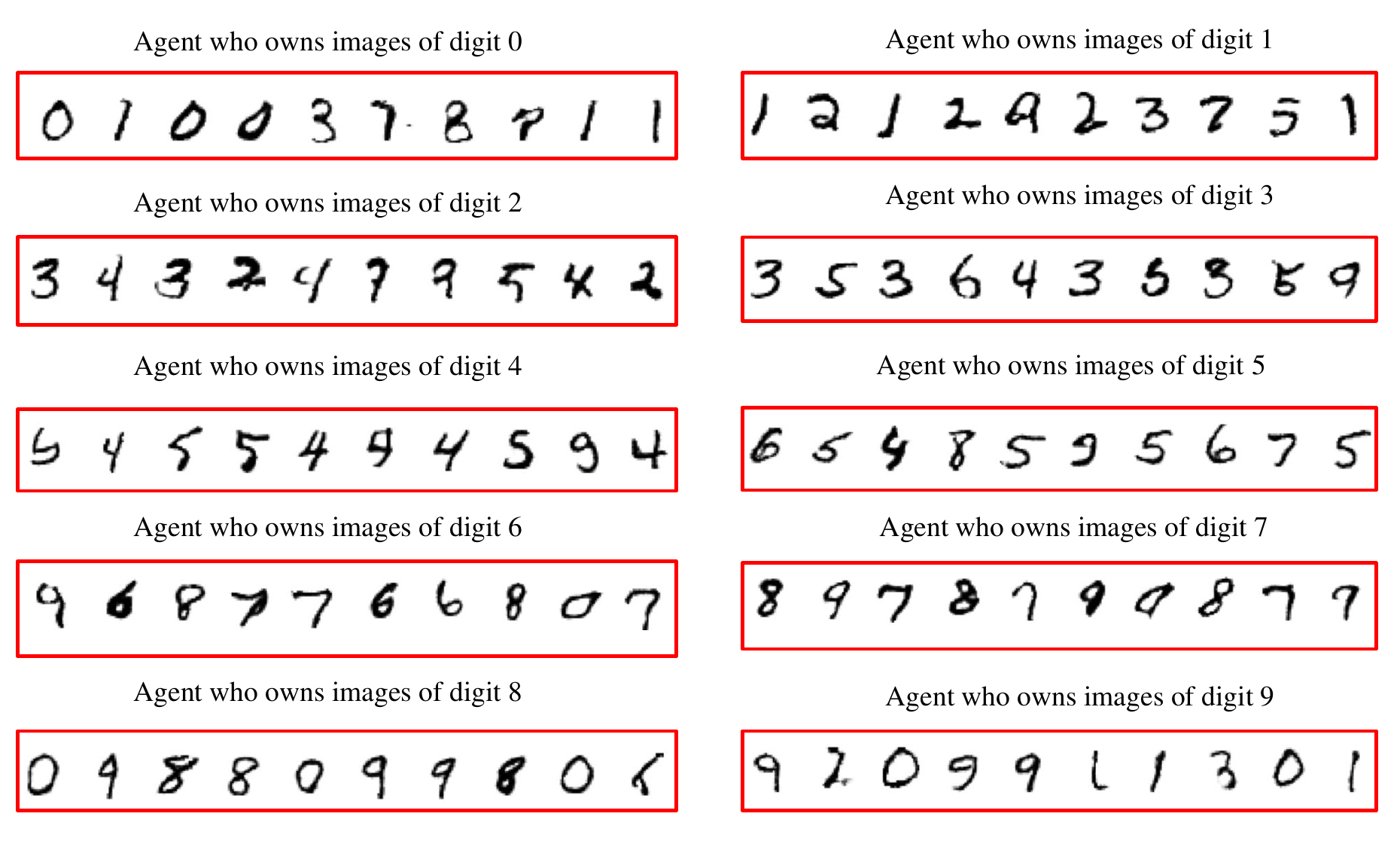}
	\caption{MNIST images generated by BGAN agents.}
	\label{fig:digits}
\end{figure*}


\begin{figure*}[t]
	\centering
	\includegraphics[width=0.79\textwidth]{./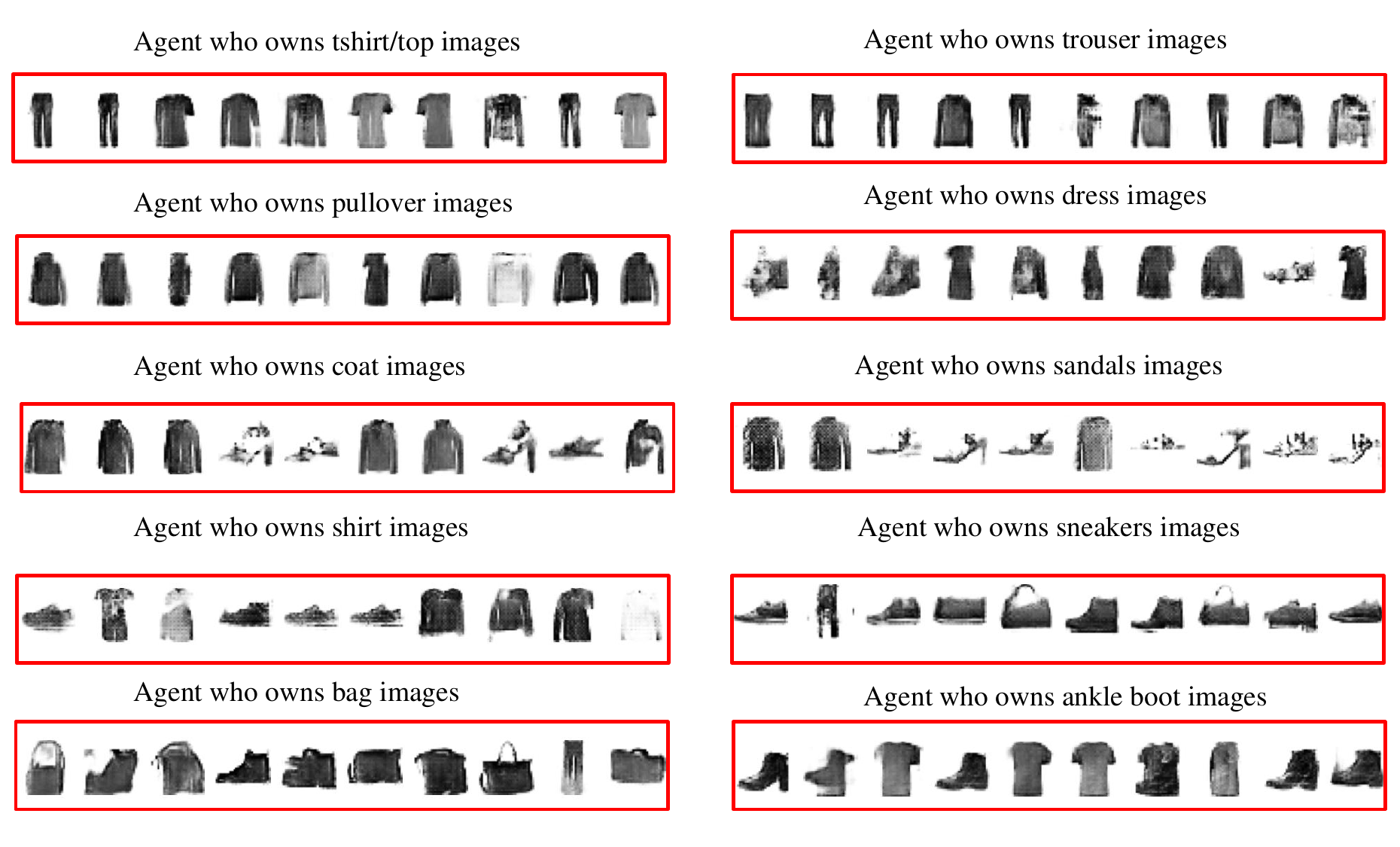}
	\caption{Fashion MNIST images generated by BGAN agents.}
	\label{fig:fashion_MNIST}
\end{figure*}

\begin{figure*}[t]
	\centering
	\includegraphics[width=0.79\textwidth]{./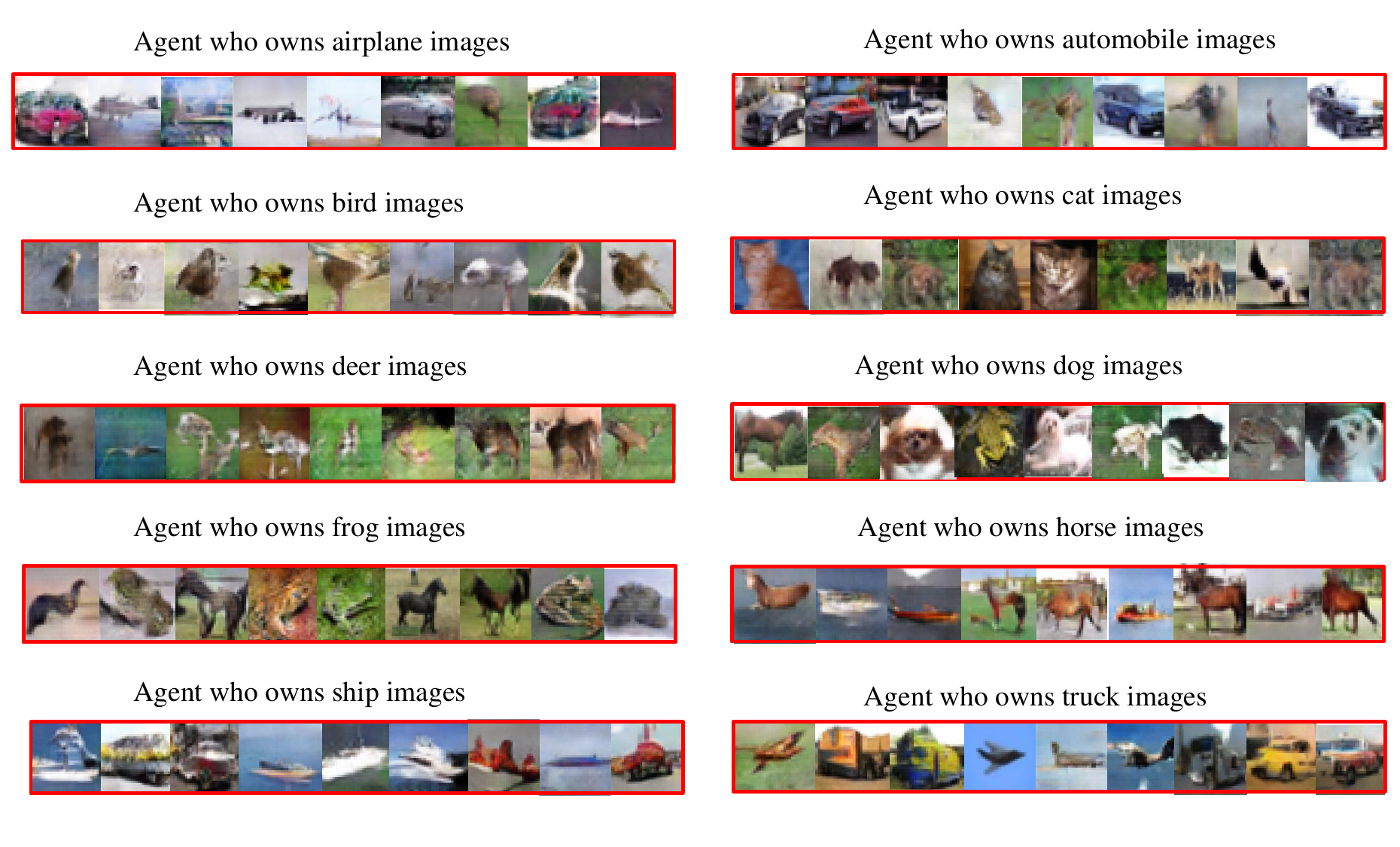}
	\caption{CIFAR-10 images generated by BGAN agents.}
	\label{fig:CIFAR}
\end{figure*}

Furthermore, we compare the performance of BGAN agents with other distributed GANs using the MNIST, fashion MNIST, and CIFAR-10 datasets. In order to show the information flow between the agents, we consider 10 agents each of which owns only images of a single class (a digit in MNIST data, a cloth in fashion MNIST, or an animal type in CIFAR-10 datasets). Figure \ref{fig:digits} shows the images generated by BGAN for the MNIST dataset. From Figure \ref{fig:digits}, we can see that all BGAN agents are able to not only generate digits similar to the dataset that they own, but they can also generate digits similar to their neighbors' datasets. This is a valuable result since, similar to the experiments on the non-overlapping datasets, we can see that BGAN can enable agents to transfer information among them while not sharing their dataset. 

In addition to MNIST dataset, Figures \ref{fig:fashion_MNIST} and \ref{fig:CIFAR} show the generated samples by BGAN agents who have been fed by only fashion MNIST and CIFAR-10 images. In Figures \ref{fig:fashion_MNIST} and \ref{fig:CIFAR} each agent owns the images of only one class. From Figures \ref{fig:fashion_MNIST} and \ref{fig:CIFAR} we can see that using the BGAN architecture, each agent can generate the images that look like its own class as well as its neighbors' classes. These results indicate that the BGAN architecture enables the agents to flow the information regarding their dataset without sharing their real data samples.

Moreover, we calculate the FID between the generated samples of BGAN, MDGAN, FLGAN, and F2U. Table \ref{tab:FID} shows that BGAN outperforms the other architectures in terms of the FID value (normalized with respect to the maximum achieved FID), especially for CIFAR-10 dataset which has more complex images with multiple color channels, the difference between the FID of the BGAN architecture and other architectures is higher. In particular, BGAN has 13\%, 31\%, and 38\% lower FID compared to the best of the other distributed architectures for MNIST, fashion MNIST, and CIFAR-10 datasets, respectively.

\begin{table}[t!]
	\centering
	\begin{tabularx}{\columnwidth}{X|c|c|c|c }
		Dataset & BGAN & MDGAN & FLGAN & F2U\\\hline
		MNIST & $ 1 $ & $ 1.23 $ & $ 1.34 $ & $ 1.13 $\\\hline
		Fashion MNIST& $ 1 $ & $ 1.39 $ & $ 1.47 $ & $1.31$\\\hline
		CIFAR-10 & $ 1 $ & $ 1.45 $ & $ 1.53 $ & $ 1.38 $
	\end{tabularx}
	\caption{FID comparison between BGAN, FLGAN, MDGAN, and F2U agents for MNIST, fashion MNIST, and CIFAR-10 datasets.}
	\label{tab:FID}
\end{table}

{One of the key advantages of BGAN is its low {bandwidth requirements}, particularly for cases with very deep architectures. This advantage becomes especially significant when considering the bandwidth requirements for transmitting data between agents. As shown in Table \ref{tab:com}, the communication requirements of different distributed GAN architectures vary significantly.

In the case of BGAN, each agent receives $ b $ samples from its neighbors at every training epoch, where $ b $ is the batch size. This means that at every time step, $ \mathcal{O}\left(nb|\boldsymbol{x}|\right) $ bandwidth is needed to transmit data between the agents, where $ |\boldsymbol{x}| $ is the size of each data sample.

In contrast, the bandwidth required for MDGAN, FLGAN, and F2U are $\mathcal{O}\left(n\left(b|\boldsymbol{x}| + |\boldsymbol{\theta}_d|\right) \right) $, $\mathcal{O} \left( n\left(|\boldsymbol{\theta}_g|+|\boldsymbol{\theta}_d|\right) \right)$, and $ \mathcal{O}\left(n\left(b|\boldsymbol{x}|+1\right)\right) $, respectively. Clearly, BGAN can significantly reduce the bandwidth overhead of distributed GANs, making it a more efficient choice for distributed learning.

Furthermore, BGANs do not require a central unit that aggregates information from multiple agents, which enhances their robustness in scenarios where an agent fails to communicate with its neighbors. This is a significant advantage over FLGAN, MDGAN, and F2U, which rely on a central unit. In these architectures, any failure in the central unit will disrupt the GAN output at all of the agents.

It is also important to note that our BGAN model does not require the network to be fully connected. Instead, as stated in Theorem \ref{theorem:reach}, the network needs to be strongly connected. A graph is said to be strongly connected if every agent is reachable from every other agent, either directly or indirectly. This ensures the effective propagation of \emph{ideas} or data samples across the network and allows for more flexibility in the network topology. This requirement of strong connectivity is common to other distributed GAN architectures such as FLGAN, MDGAN, and F2U, and it is a crucial factor in managing the bandwidth requirements of the network.}

\begin{table}[t!]
	\centering
	\begin{tabularx}{\columnwidth}{c|X}
		Architecture & Communication resources \\\hline
		BGAN & $ \mathcal{O}\left(nb|\boldsymbol{x}|\right) $\\
		MDGAN & $\mathcal{O}\left(n\left(b|\boldsymbol{x}| + |\boldsymbol{\theta}_d|\right) \right) $\\
		FLGAN & $\mathcal{O} \left( n\left(|\boldsymbol{\theta}_g|+|\boldsymbol{\theta}_d|\right) \right)$  \\
		F2U & $ \mathcal{O}\left(n\left(b|\boldsymbol{x}|+1\right)\right) $
	\end{tabularx}
	\caption{{Bandwidth} requirements of different distributed GAN architectures}
	\label{tab:com}
\end{table}

\section{Conclusion}\label{sec:conc}
{In this paper, we have proposed a novel BGAN architecture that enables agents to learn a data distribution in a distributed fashion in order to generate real-like data samples. We have formulated a game between BGAN agents and derived a unique NE at which agents can integrate data information from other agents without sharing their real data. Our architecture is fully distributed and does not require a central controller, significantly reducing communication overhead compared to other state-of-the-art distributed GAN architectures. Furthermore, our BGAN allows agents with different DNN designs to participate, enabling even computationally limited agents to contribute to brainstorming and gain information from others. Experimental results have shown superior performance in terms of lower JSD and FID across multiple data distributions. Looking forward, potential research directions include exploring different network architectures within the BGAN framework, applying our approach to other types of data such as text or time-series, and investigating more complex multi-agent games in the context of GANs. This work paves the way for future research in distributed learning and data generation, with the potential to inspire new methodologies and applications.}

\def\baselinestretch{1}
\bibliographystyle{IEEEtran}
\bibliography{references}

\begin{thebibliography}{10}
\providecommand{\url}[1]{#1}
\csname url@samestyle\endcsname
\providecommand{\newblock}{\relax}
\providecommand{\bibinfo}[2]{#2}
\providecommand{\BIBentrySTDinterwordspacing}{\spaceskip=0pt\relax}
\providecommand{\BIBentryALTinterwordstretchfactor}{4}
\providecommand{\BIBentryALTinterwordspacing}{\spaceskip=\fontdimen2\font plus
\BIBentryALTinterwordstretchfactor\fontdimen3\font minus
  \fontdimen4\font\relax}
\providecommand{\BIBforeignlanguage}[2]{{%
\expandafter\ifx\csname l@#1\endcsname\relax
\typeout{** WARNING: IEEEtran.bst: No hyphenation pattern has been}%
\typeout{** loaded for the language `#1'. Using the pattern for}%
\typeout{** the default language instead.}%
\else
\language=\csname l@#1\endcsname
\fi
#2}}
\providecommand{\BIBdecl}{\relax}
\BIBdecl

\bibitem{NIPS2014_5423}
I.~Goodfellow, J.~Pouget-Abadie, M.~Mirza, B.~Xu, D.~Warde-Farley, S.~Ozair,
  A.~Courville, and Y.~Bengio, ``Generative adversarial nets,'' in
  \emph{Advances in Neural Information Processing Systems 27}, Z.~Ghahramani,
  M.~Welling, C.~Cortes, N.~D. Lawrence, and K.~Q. Weinberger, Eds.\hskip 1em
  plus 0.5em minus 0.4em\relax Curran Associates, Inc., 2014, pp. 2672--2680.

\bibitem{li2016precomputed}
C.~Li and M.~Wand, ``Precomputed real-time texture synthesis with markovian
  generative adversarial networks,'' in \emph{European Conference on Computer
  Vision}.\hskip 1em plus 0.5em minus 0.4em\relax Springer, 2016, pp. 702--716.

\bibitem{ferdowsi2019generative}
A.~Ferdowsi and W.~Saad, ``Generative adversarial networks for distributed
  intrusion detection in the {Internet of Things},'' in \emph{Proceedings of
  IEEE Global Communications Conference}, Waikoloa, HI, USA, December 2019.

\bibitem{reed2016generative}
S.~Reed, Z.~Akata, X.~Yan, L.~Logeswaran, B.~Schiele, and H.~Lee, ``Generative
  adversarial text to image synthesis,'' \emph{arXiv preprint
  arXiv:1605.05396}, 2016.

\bibitem{pascual2017segan}
S.~Pascual, A.~Bonafonte, and J.~Serra, ``Segan: Speech enhancement generative
  adversarial network,'' \emph{arXiv preprint arXiv:1703.09452}, 2017.

\bibitem{vondrick2016generating}
C.~Vondrick, H.~Pirsiavash, and A.~Torralba, ``Generating videos with scene
  dynamics,'' in \emph{Advances In Neural Information Processing Systems},
  2016, pp. 613--621.

\bibitem{2008mapreduce}
J.~Dean and S.~Ghemawat, ``Mapreduce: simplified data processing on large
  clusters,'' \emph{Communications of the ACM}, vol.~51, no.~1, pp. 107--113,
  2008.

\bibitem{low2012distributed}
Y.~Low, D.~Bickson, J.~Gonzalez, C.~Guestrin, A.~Kyrola, and J.~M. Hellerstein,
  ``Distributed graphlab: a framework for machine learning and data mining in
  the cloud,'' \emph{Proceedings of the VLDB Endowment}, vol.~5, no.~8, pp.
  716--727, 2012.

\bibitem{pmlr}
M.~Assran, N.~Loizou, N.~Ballas, and M.~Rabbat, ``Stochastic gradient push for
  distributed deep learning,'' ser. Proceedings of Machine Learning Research,
  K.~Chaudhuri and R.~Salakhutdinov, Eds., vol.~97.\hskip 1em plus 0.5em minus
  0.4em\relax Long Beach, California, USA: PMLR, 09--15 Jun 2019, pp. 344--353.

\bibitem{NIPS2017_6749}
W.~Wen, C.~Xu, F.~Yan, C.~Wu, Y.~Wang, Y.~Chen, and H.~Li, ``Terngrad: Ternary
  gradients to reduce communication in distributed deep learning,'' in
  \emph{Advances in Neural Information Processing Systems 30}.\hskip 1em plus
  0.5em minus 0.4em\relax Long Beach, CA, USA: Curran Associates, Inc.,
  December 2017, pp. 1509--1519.

\bibitem{dean2012large}
J.~Dean, G.~Corrado, R.~Monga, K.~Chen, M.~Devin, M.~Mao, M.~Ranzato,
  A.~Senior, P.~Tucker, K.~Yang \emph{et~al.}, ``Large scale distributed deep
  networks,'' in \emph{Advances in neural information processing systems},
  2012, pp. 1223--1231.

\bibitem{konevcny2016federated}
J.~Kone{\v{c}}n{\`y}, H.~B. McMahan, F.~X. Yu, P.~Richt{\'a}rik, A.~T. Suresh,
  and D.~Bacon, ``Federated learning: Strategies for improving communication
  efficiency,'' \emph{arXiv preprint arXiv:1610.05492}, 2016.

\bibitem{hoang2017multi}
Q.~Hoang, T.~D. Nguyen, T.~Le, and D.~Phung, ``Multi-generator generative
  adversarial nets,'' \emph{arXiv preprint arXiv:1708.02556}, 2017.

\bibitem{durugkar2016generative}
I.~Durugkar, I.~Gemp, and S.~Mahadevan, ``Generative multi-adversarial
  networks,'' \emph{arXiv preprint arXiv:1611.01673}, 2016.

\bibitem{ghosh2017multi}
A.~Ghosh, V.~Kulharia, V.~Namboodiri, P.~H. Torr, and P.~K. Dokania,
  ``Multi-agent diverse generative adversarial networks,'' \emph{arXiv preprint
  arXiv:1704.02906}, 2017.

\bibitem{hardy2019md}
C.~Hardy, E.~Le~Merrer, and B.~Sericola, ``{MD-GAN}: Multi-discriminator
  generative adversarial networks for distributed datasets,'' in \emph{IEEE
  International Parallel and Distributed Processing Symposium (IPDPS)}.\hskip
  1em plus 0.5em minus 0.4em\relax IEEE, 2019, pp. 866--877.

\bibitem{yonetani2019decentralized}
R.~Yonetani, T.~Takahashi, A.~Hashimoto, and Y.~Ushiku, ``Decentralized
  learning of generative adversarial networks from multi-client non-iid data,''
  \emph{arXiv preprint arXiv:1905.09684}, 2019.

\bibitem{goodfellow2016deep}
I.~Goodfellow, Y.~Bengio, and A.~Courville, \emph{Deep learning}.\hskip 1em
  plus 0.5em minus 0.4em\relax {MIT} presss, 2016.

\bibitem{horn2012matrix}
R.~A. Horn and C.~R. Johnson, \emph{Matrix analysis}.\hskip 1em plus 0.5em
  minus 0.4em\relax Cambridge university press, 2012.

\bibitem{lecun-mnisthandwrittendigit-2010}
\BIBentryALTinterwordspacing
Y.~LeCun and C.~Cortes, ``{MNIST} handwritten digit database,'' 2010. [Online].
  Available: \url{http://yann.lecun.com/exdb/mnist/}
\BIBentrySTDinterwordspacing

\bibitem{xiao2017fashion}
H.~Xiao, K.~Rasul, and R.~Vollgraf, ``Fashion-mnist: a novel image dataset for
  benchmarking machine learning algorithms,'' \emph{arXiv preprint
  arXiv:1708.07747}, 2017.

\bibitem{krizhevsky2009learning}
A.~Krizhevsky, G.~Hinton \emph{et~al.}, ``Learning multiple layers of features
  from tiny images,'' 2009.

\bibitem{radford2015unsupervised}
A.~Radford, L.~Metz, and S.~Chintala, ``Unsupervised representation learning
  with deep convolutional generative adversarial networks,'' \emph{arXiv
  preprint arXiv:1511.06434}, 2015.

\bibitem{he2016deep}
K.~He, X.~Zhang, S.~Ren, and J.~Sun, ``Deep residual learning for image
  recognition,'' in \emph{Proceedings of the IEEE conference on computer vision
  and pattern recognition}, 2016, pp. 770--778.

\bibitem{gulrajani2017improved}
I.~Gulrajani, F.~Ahmed, M.~Arjovsky, V.~Dumoulin, and A.~C. Courville,
  ``Improved training of wasserstein gans,'' in \emph{Advances in neural
  information processing systems}, 2017, pp. 5767--5777.

\end{thebibliography}

\end{document}